\tikzstyle{bubble} = [ellipse, draw=black, fill=blue!10, text width=6em, text centered, minimum height=2em]
\tikzstyle{line} = [draw, -stealth]
\newtheorem{theorem}{Theorem}
\newtheorem{lemma}{Lemma}
\newtheorem{definition}{Definition}
\title{CoLT: The conditional localization test for assessing the accuracy of neural posterior estimates}
\author{%
  Tianyu Chen${^*}$, Vansh Bansal\thanks{equal contribution}  \\
  Department of Statstics and Data Sciences\\
  UT Austin\\
  \texttt{\{tianyuchen, vansh\}@utexas.edu} \\
  \And 
  James G. Scott\\
  Department of Statistics and Data Sciences\\
  McCombs School of Business\\
  UT Austin\\
  \texttt{james.scott@mccombs.utexas.edu} \\
}
\begin{document}

\maketitle

\begin{abstract}
We consider the problem of validating whether a neural posterior estimate \( q(\theta \mid x) \) is an accurate approximation to the true, unknown true posterior \( p(\theta \mid x) \). Existing methods for evaluating the quality of an NPE estimate are largely derived from classifier-based tests or divergence measures, but these suffer from several practical drawbacks. As an alternative, we introduce the \emph{Conditional Localization Test} (CoLT), a principled method designed to detect discrepancies between \( p(\theta \mid x) \) and \( q(\theta \mid x) \) across the full range of conditioning inputs. Rather than relying on exhaustive comparisons or density estimation at every \( x \), CoLT learns a localization function that adaptively selects points $\theta_l(x)$ where the neural posterior $q$ deviates most strongly from the true posterior $p$ for that $x$. This approach is particularly advantageous in typical simulation-based inference settings, where only a single draw \( \theta \sim p(\theta \mid x) \) from the true posterior is observed for each conditioning input, but where the neural posterior \( q(\theta \mid x) \) can be sampled an arbitrary number of times. Our theoretical results establish necessary and sufficient conditions for assessing distributional equality across all \( x \), offering both rigorous guarantees and practical scalability. Empirically, we demonstrate that CoLT not only performs better than existing methods at comparing $p$ and $q$, but also pinpoints regions of significant divergence, providing actionable insights for model refinement. These properties position CoLT as a state-of-the-art solution for validating neural posterior estimates.

\end{abstract}

\section{Introduction}

This paper proposes a new method for determining whether two conditional distributions $p(\theta \mid x)$ and $q(\theta \mid x)$ are equal, or at least close, across all conditioning inputs. One of the most important applications of this idea arises in validating conditional generative models for neural posterior estimation, or NPE, which is a rapidly growing area of simulation-based inference. Here $\theta$ represents the parameter of a scientific model with prior $p(\theta)$, while $x \sim p(x \mid \theta)$ represents data assumed to have arisen from that model.  In NPE, we simulate data pairs \( (\theta, x) \) drawn from the joint distribution \( (x, \theta) \sim p(x, \theta) \equiv p(\theta) p(x \mid \theta) \). A conditional generative model---such as a variational autoencoder \cite{kingma2013auto}, normalizing flow \cite{papamakarios2021normalizing}, diffusion model \cite{ho2020denoising, geffner2023compositional, chen2025conditional, gloeckler2024all}, or flow-matching estimator \cite{wildberger2023flow}---is then trained on these $(x, \theta)$ pairs to approximate \( p(\theta \mid x) \) with a learned distribution \( q(\theta \mid x) \). The problem of neural posterior validation is to assess whether the learned $q$ is a good approximation to the true $p$.

This setting poses challenges not present in simpler problem of testing for equality of unconditional distributions, with no $x$. For one thing, we must verify that \( q(\theta \mid x) \) approximates \( p(\theta \mid x) \) not merely for a single given \( x \), but consistently for all \( x \), without having to explicitly consider all possible $x$ points.  Moreover, most practical problems present a severe asymmetry in the available number of samples from $p$ and $q$.  In NPE, for example, we observe just a single "real" sample \( \theta \sim p(\theta \mid x) \) for each \( x \), yet we can generate an arbitrary number of "synthetic" samples \(\tilde{\theta} \sim q(\theta \mid x) \) by repeatedly querying our NPE model.  Any successful method for assessing distributional equivalence of $p$ and $q$ must account for this imbalance.

\paragraph{Existing methods.} Several methods have been proposed to assess the accuracy of a neural posterior estimate. But each has shortcomings. One popular method called Simulation-Based Calibration (SBC) \cite{talts2018validating} uses a simple rank-based statistic for each margin of $q(\theta \mid x)$, but this provides only a necessary (not sufficient) condition for posterior validity. Moreover, since rank statistics are computed separately for each margin, the statistical power of SBC suffers badly from multiple-testing issues in high-dimensional settings. TARP \cite{lemos2023sampling} provides a condition that is both necessary and sufficient for the neural posterior estimate to be valid. However, TARP's practical effectiveness depends heavily on the choice of a (non-trainable) probability distribution to generate "reference" points that are needed to perform the diagnostic, and the method can perform poorly under a suboptimal choice of this distribution.  Finally, the classifer two-sample test, or C2ST \cite{lopez2016revisiting}, involves training a classifier to distinguish whether a given $\theta$ sample originates from the true posterior or the estimated one. It then uses the classifier output to construct an asymptotically normal test statistic under the null hypothesis that $p=q$. But as many others have observed, the C2ST hinges on the classifier's ability to effectively learn a global decision boundary over $\mathcal{X}$ and $\Theta$ simultaneously. In practice, the classifier may struggle to do so, due to insufficient training data, limited model capacity, or the inherent complexity of the task. Moreover, to perform well, the C2ST usually needs a class-balanced sample, which entails multiple draws of $\theta$ from the true posterior at a given $x$. This is often impractical, as in many settings we only have access to a single $\theta \sim p(\theta \mid x)$ at a given $x$.

\paragraph{Our contributions.} Our paper addresses these shortcomings with a principled and efficient approach, called the \emph{Conditional Localization Test} (CoLT), for detecting discrepancies between $p$ and $q$. CoLT is based on the principle of measure theoretic distinguishability: intuitively, if two conditional densities \( p(\theta \mid x) \) and \( q(\theta \mid x) \) are unequal, they must exhibit a nonzero difference in mass over some specific ball of positive radius. The basic idea of CoLT is to find that ball---that is, to train a \textit{localization function} $\theta_l: \mathcal{X} \to \Theta$ that adaptively selects the point $\theta_l(x)$ where, for a given $x$, $p$ and $q$ are maximally different in the mass they assign to a neighborhood of $\theta_l(x)$.  Intuitively, a neural network that learns a smooth mapping $\theta_l(x)$ should be well suited for this task: if two conditioning inputs $x$ and $\tilde{x}$ are close, we might reasonably expect that any differences between $p$ and $q$ would manifest similarly (i.e. in nearby regions of $\theta$ space) for both $x$ and $\tilde{x}$.  This smoothness allows the network to generalize local differences across nearby regions in $x$ space, making the search for discrepancies both efficient and robust.

Of course, the principle of measure-theoretic distinguishability is well established, and so one might fairly ask: why has it not been widely exploited in machine learning as a tool for comparing  conditional distributions? This is likely for two reasons, one geometric and one computational, both of which CoLT successfully addresses.

First, directly comparing mass over high-dimensional Euclidean balls can be ineffective for testing, as the Euclidean metric may not align with the geometry of how $p$ and $q$ are most readily distinguishable. To address this, we use a trainable embedding function \( \phi \) that maps points from the parameter space \( \Theta \) into a latent Euclidean space, where distances can better reflect the concentration of probability mass. We then assess mass equivalence over Euclidean balls in this latent space, i.e.~over balls $B_{\phi}(\theta, R) = \bigl\{\theta' \in \Theta : \|\phi(\theta') - \phi(\theta)\|_2 \le R\bigr\}$.  We show how the necessary machinery from real analysis can be rigorously adapted to this setting, with modest requirements on $\phi$. 

Second, even when assessing equivalence over non-Euclidean metric balls, naively training a localization function $\theta_l(x)$ would seem to require repeatedly sampling \( \theta \) from both $p$ and $q$ at some \( x \), comparing their local (Monte Carlo) integrals over all possible balls. This is intractable for all but the smallest problems. Luckily, we show that training $\theta_l$ can be done \textit{far} more efficiently. The essential idea involves using a single observed draw from \( p(\theta \mid x) \) to anchor our comparison of whether the conditional mass of \( q(\theta \mid x) \) aligns with \( p(\theta \mid x) \), in expectation over \( x \). This single draw, combined with the localization function $\theta_l$, can be used to carefully construct a one-dimensional \textit{ball probability rank} statistic that is uniformly distributed if and only if $p$ and $q$ agree on all local neighborhoods around $\theta_l(x)$.  We rigorously construct this rank statistic, and we show how it leads to a practical optimization algorithm for $\theta_l(x)$. Moreover, the rank statistic naturally induces a valid integral probability metric (IPM), offering a continuous measure of the distance between the two distributions. This is especially valuable in NPE settings: by moving beyond binary assessment, CoLT allows user of NPE methods to quantify improvements across training runs, benchmark multiple posterior approximators, or make targeted improvements to model architecture based on where specifically the neural posterior $q$ is performing poorly.  

Finally, our empirical results demonstrate that CoLT consistently outperforms current state-of-the-art methods across a wide range of benchmark problems. The evidence shows that CoLT is able to consistently identify subtle discrepancies that classifier-based approaches routinely miss, providing strong empirical support for our theoretical analysis.





\begin{figure}[t!]
    \centering
    \begin{subfigure}[b]{0.22\linewidth}
        \includegraphics[width=\linewidth,height=2.5cm]{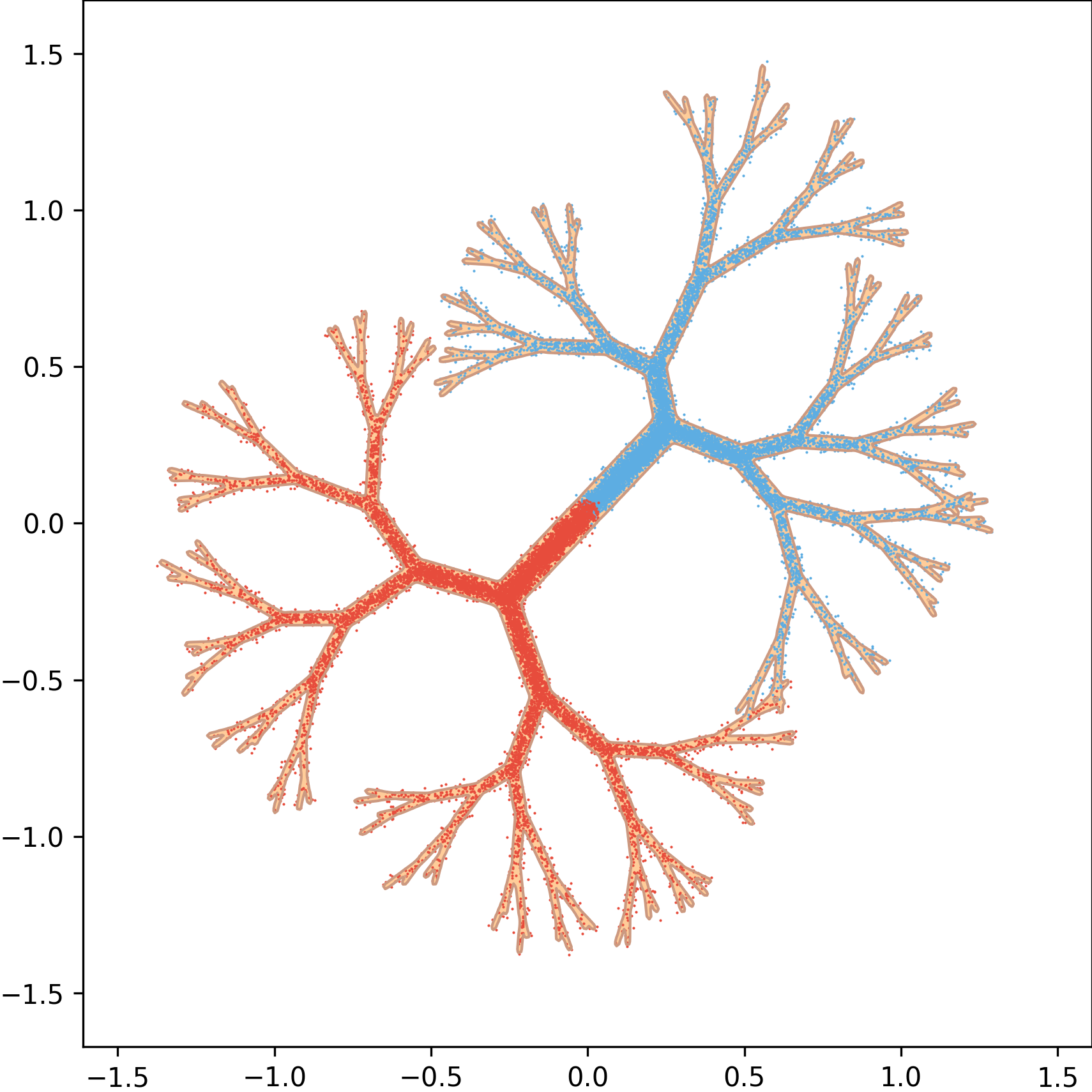}
        \caption{True posterior.}
        \label{fig:tree_alpha0}
    \end{subfigure}
    \begin{subfigure}[b]{0.22\linewidth}
        \includegraphics[width=\linewidth,height=2.5cm]{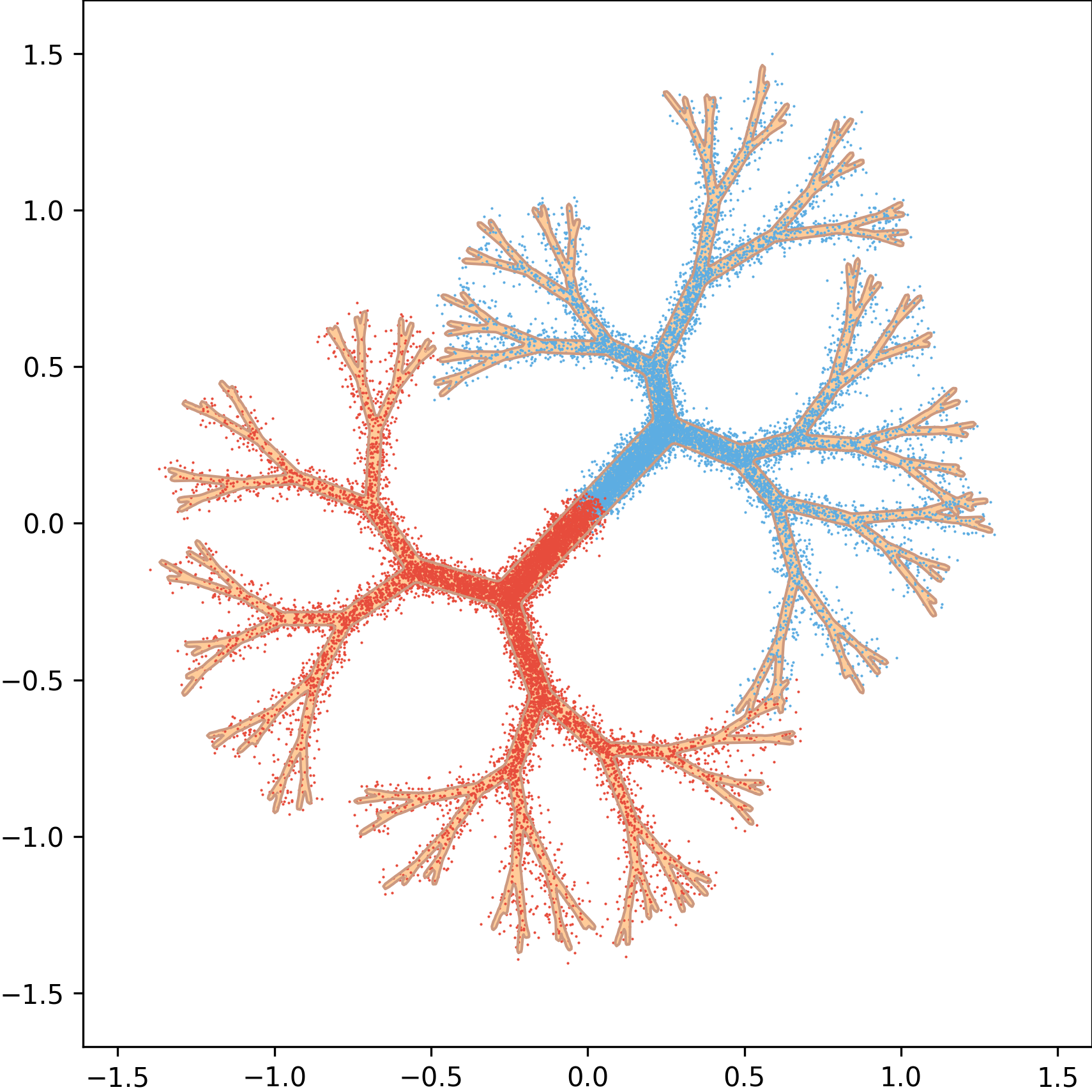}
        \caption{$\alpha = 1.5$}
        \label{fig:tree_alpha1.5}
    \end{subfigure}
    \begin{subfigure}[b]{0.22\linewidth}
        \includegraphics[width=\linewidth,height=2.5cm]{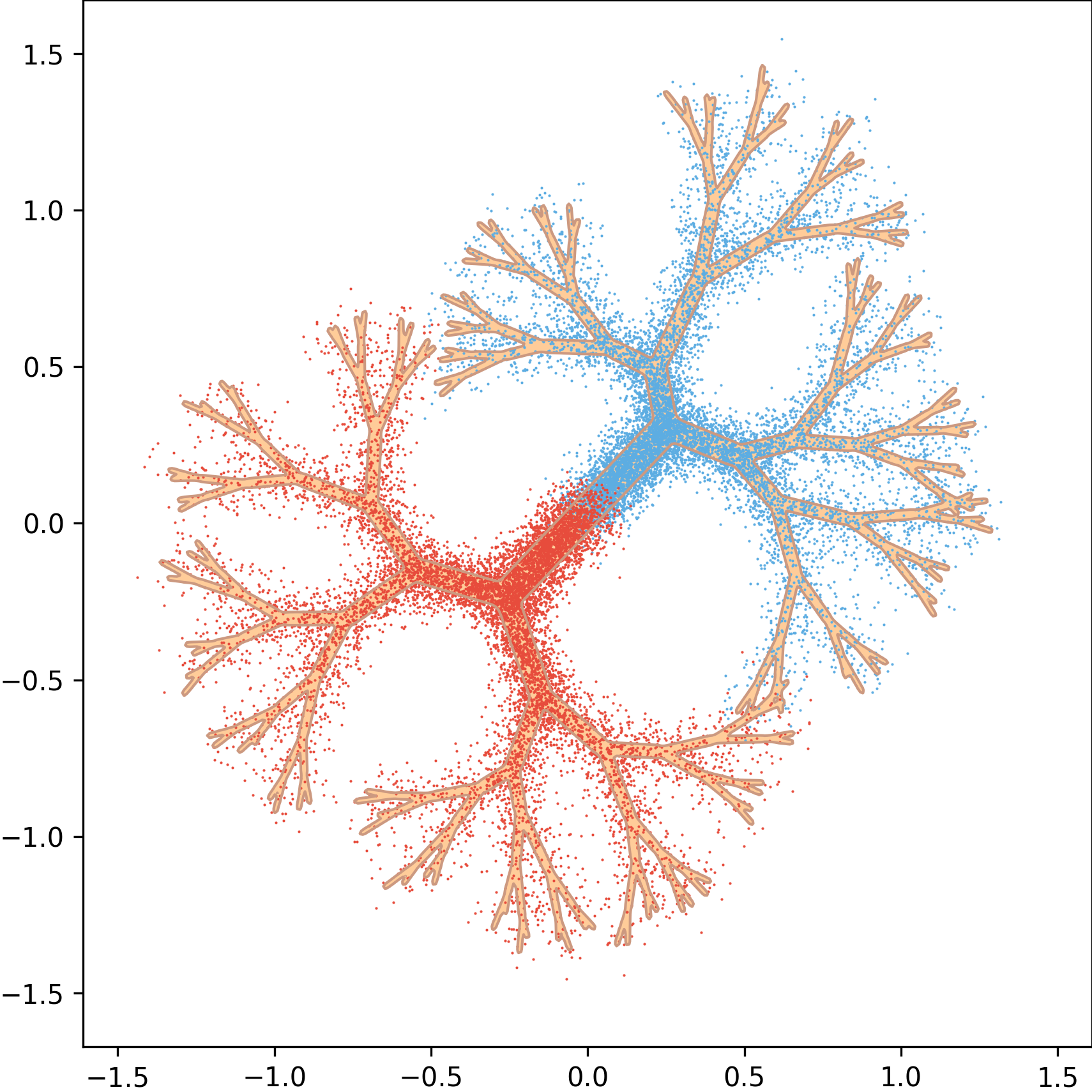}
        \caption{$\alpha = 4$}
        \label{fig:tree_alpha4}
    \end{subfigure}
    \begin{subfigure}[b]{0.28\linewidth}
        \includegraphics[width=\linewidth,height=2.5cm]{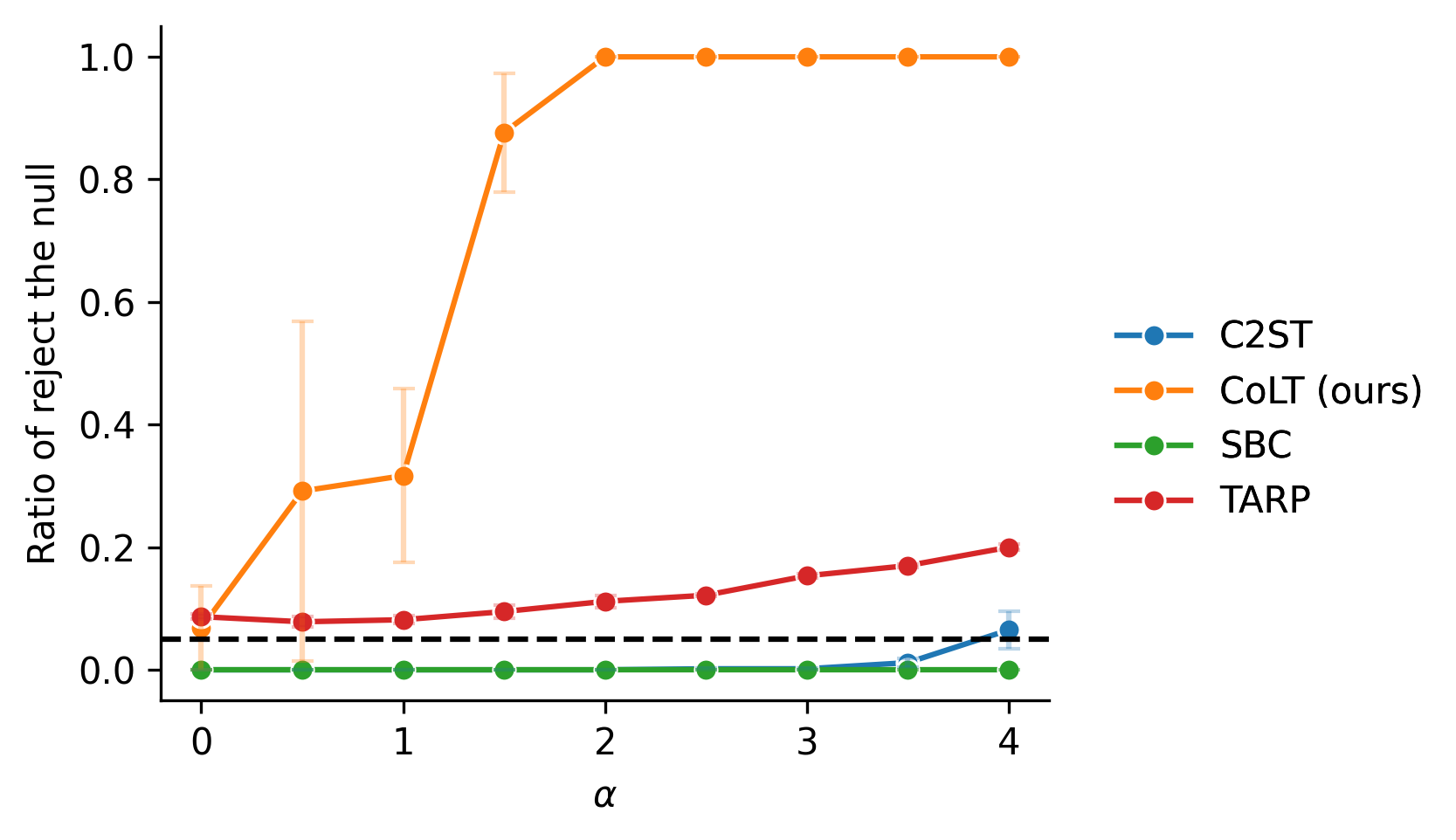}
        \caption{Power curve across $\alpha$}
        \label{fig:tree_power}
    \end{subfigure}
    \caption{ \label{fig:toy_example_tree}
   Results on the toy tree-shaped example. As $\alpha$ increases (larger perturbation), the distribution becomes blurrier and deviates from the true manifold shown in Panel A. CoLT with a learned metric embedding maintains strong statistical power even for modest perturbations (Panel B), whereas the C2ST, SBC, and TARP all perform  poorly even for much larger ones like $\alpha = 4$ (Panels C/D).
    }
    \vspace{-0.6cm}
    \label{fig:main}
\end{figure}

\paragraph{A toy example.} To provide an initial demonstration of CoLT's effectiveness, we begin with a toy example. Panel A of Figure \ref{fig:toy_example_tree} shows $p(\theta \mid x)$ as living on a structured manifold, with branches A (bottom left) and B (top right) representing distinct regions of probability mass, as introduced in \cite{karras2024guiding}.  We sample a conditioning input as $x \sim \mathcal{N}(0, 1)$, with the true conditional distribution defined as:
\[
p(\theta \mid x) \sim 
\begin{cases}
\text{Branch A} & \text{if } x \ge 0, \\
\text{Branch B} & \text{if } x < 0.
\end{cases}
\]
Our goal here is to assess whether a method can reliably detect even small perturbations of $p$. This example, while simple, effectively targets a common failure mode of generative models: producing samples that lie near, but not exactly on, the true manifold of the posterior.

To benchmark CoLT's performance, we constructed "perturbed" posterior samples $\tilde{\theta} \sim q(\theta \mid x)$ by adding a small amount of isotropic Gaussian noise to "correct" samples: that is, $\tilde{\theta} = \theta + e$, where $\theta$ is a draw from $p(\theta \mid x)$ and each component of $e$ has standard deviation $0.01\cdot \alpha$. We then varied $\alpha$, which controls the degree of mismatch between $p$ and $q$, and we tested the power of CoLT versus TARP, SBC, and the C2ST for each $\alpha$. The nominal Type-I error rate was set to 0.05 for all methods.  To ensure a fair comparison, we trained the C2ST classifier and CoLT localization function with similar model capacities (number of layers and size of each layer); see Appendix \ref{app:experiment_details} for details.

When $\alpha = 1.5$ (Panel B), the samples from $q$ fall very slightly off the correct manifold.  CoLT can reliably detect this difference (power = $0.877$), while C2ST failed entirely (power = $0.000$). At a larger value of $\alpha = 4.0$ (Panel C), CoLT achieves perfect power (1.000), whereas C2ST only reaches power of 0.065. Panel D shows that, while performing a bit better than C2ST, neither TARP nor SBC are competitive with CoLT at any $\alpha$.  These results highlight our method's performance advantage even in scenarios where the posterior lives on a structured manifold, and the discrepancy between $p$ and $q$ is reasonably small.  We also emphasize that CoLT doesn't merely \textit{detect} the difference; as our theory shows, it can also quantify the difference via an integral probability metric.

\section{Theoretical Results}

In this section, we present our main theoretical results; all proofs are given in the Appendix. Throughout, we denote the Lebesgue measure by $m(\cdot)$ and use $d\theta$ to represent Lebesgue integration. We also use the shorthand notation $q(\theta \mid x) = p(\theta \mid x)$, or simply $p=q$, to denote that $q(\theta \mid x) = p(\theta \mid x)$ for almost every $(\theta, x) \in \Theta \times \mathcal{X}$.  Throughout, we assume that $p$ and $q$ are  absolutely continuous with respect to Lebesgue measure for all $x$.  



\subsection{The conditional localization principle}

CoLT relies on what we might call the \textit{localization principle}: to check whether $p$ and $q$ are different, search for the point $\theta_l$, and the local neighborhood around $\theta_l$, where the mass discrepancy between $p$ and $q$ is as large as possible. If the largest such discrepancy is 0, the two distributions are equal.

Taken at face value, however, the localization principle seems deeply impractical. First, if we wish to conclude that $p(\theta \mid x) = q(\theta \mid x)$ for all $x$, it seems that we would need to apply the principle pointwise over a grid of \( x \)-values. Second, for each \( x \), we would need to search for the point \( \theta_l \) that maximizes the discrepancy in local mass between $p$ and $q$, if one exists for that $x$. Finally, we would need to draw many samples from both \( p(\theta \mid x) \) and \( q(\theta \mid x) \) to obtain reliable Monte Carlo integrals. The sheer number of evaluations needed---across many \( x \)-values, many candidate \( \theta_l \)-locations per \( x \), and many Monte Carlo samples per \( (\theta_l, x) \) pair---renders this na\"ive approach not just intractable, but \textit{nestedly} intractable.  

Luckily, we can do much better than the na\"ive approach. In fact, our subsequent results can be thought of as peeling back these layers of intractability one at a time.

We begin with a key definition. Specifically, we consider balls of the form $B_{\phi}(\theta,R) \;=\; \bigl\{\theta' \in \Theta : \|\phi(\theta') - \phi(\theta)\|_2 \le R\bigr\}$, where $\phi$ is an embedding function. By defining neighborhoods through \( d_\phi \), we can shape our regions of comparison to better reflect meaningful differences in probability mass. The following imposes a mild, but useful, geometric regularity condition on the metric $\phi$.  

\begin{definition}[Doubling Condition]
\label{def:doubling}
Let $\Theta$ be a set equipped with a map $\phi : \Theta \to \mathbb{R}^m$ and let $m$ be a measure on $\Theta$. 
For each $\theta \in \Theta$ and $R>0$, define the $\phi$-ball
\[
B_{\phi}(\theta,R) \;=\; \bigl\{\theta' \in \Theta : \|\phi(\theta') - \phi(\theta)\|_2 \le R\bigr\}.
\]
We say that $\phi$ \emph{satisfies the doubling condition} with respect to $m$ if there exists a constant $C > 0$ such that for all $\theta \in \Theta$ and all $R>0$,
\begin{equation}
\label{eqn:doubling_condition}
m\bigl(B_{\phi}(\theta,2R)\bigr) 
\;\le\;
C\,m\bigl(B_{\phi}(\theta,R)\bigr).
\end{equation}
\end{definition}
Intuitively, this condition means that the metric balls in $d_\phi$ cannot distort the geometry of $\mathcal{R}^D$ too severely. More concretely, it is sufficient that $\phi$ be Lipschitz of any order, or that all $\phi$ balls live inside Euclidean ellipsoids of uniformly bounded eccentricity.  See Appendix \ref{app:phi}.

With this definition in place, we can state our first result about the equality of conditional distributions. This result replaces the stringent requirement of verifying an equality-of-mass condition \emph{for each} \( x \) with a weaker condition that involves \emph{averaging} over \( x \). We formalize this idea in terms of a localization function \( \theta_l(x): \mathcal{X} \to \Theta\), which identifies the most informative localization point based on \( x \). Intuitively, \( \theta_l(x) \) serves as a witness to any potential discrepancy between \( p(\theta \mid x) \) and \( q(\theta \mid x) \).  

\begin{theorem}[Conditional localization]
\label{thm:zeroInnerIntegralImpliesEquality}
Let $p(\theta \mid x)$ and $q(\theta \mid x)$ be defined as before, and define the difference function $d_x(\theta) \;=\; p(\theta \mid x) \;-\; q(\theta \mid x)$. Let $\phi:\Theta \to \mathbb{R}^m$ be an embedding function, with $d_\phi$ the corresponding distance function. Suppose that $d_\phi$ satisfies the doubling condition with respect to Lebesgue measure, and let $B_{\phi}(\theta_l(x), R)$ denote the $\phi$-ball of radius $R$ centered at $\theta_l(x)$.  
Assume further that $p(x) > 0$ is a density on $\mathcal{X}$ which is strictly positive almost everywhere.

If, for every measurable function $\theta_l : \mathcal{X} \to \Theta$ and every $R>0$, we have
\[
\label{eqn:conditional_integral_vanish}
\int_{\mathcal{X}} p(x)
\biggl[
\int_{B_{\phi}(\theta_l(x), R)} d_x(\theta)\,\mathrm{d}\theta
\biggr]
\mathrm{d}x
\;=\;
0,
\]
then $d_x(\theta) = 0$ for almost every $(x,\theta)$ in $\mathcal{X} \times \Theta$.
\end{theorem}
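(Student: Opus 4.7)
The plan is to treat the hypothesis as an averaged vanishing condition on the inner integral
\[
G_R(x,t) \;:=\; \int_{B_{\phi}(t,R)} d_x(\theta)\,d\theta,
\]
and to peel away its layers of averaging one at a time. First I would use the freedom to vary $\theta_l$ pointwise in $x$ to show that, for each $R>0$, $G_R(x,\cdot)$ is essentially constant in its second argument. Then I would send $R\downarrow 0$ and invoke Lebesgue differentiation on the doubling metric measure space $(\Theta,d_\phi,m)$ to recover $d_x$ pointwise, and derive a contradiction from any non-vanishing using that $\int d_x\,d\theta=0$.

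\textbf{Step 1 (independence in $t$).} The map $G_R$ is jointly measurable in $(x,t)$, since $\mathbf{1}_{\{d_\phi(\theta,t)\le R\}}$ is product-measurable and $d_x\in L^1(\Theta)$ (note $\|d_x\|_1\le 2$ as the difference of two densities). Given two measurable selections $\theta_l^1,\theta_l^2\colon\mathcal{X}\to\Theta$ and any measurable $A\subset\mathcal{X}$, the spliced function $\tilde\theta_l:=\theta_l^1\mathbf{1}_A+\theta_l^2\mathbf{1}_{A^c}$ is measurable; applying the hypothesis to $\tilde\theta_l$ and to $\theta_l^2$ and subtracting yields
\[
\int_A p(x)\bigl[G_R(x,\theta_l^1(x))-G_R(x,\theta_l^2(x))\bigr]\,dx \;=\; 0.
\]
Since $A$ is arbitrary and $p>0$ almost everywhere, $G_R(x,\theta_l^1(x))=G_R(x,\theta_l^2(x))$ for $p$-a.e.\ $x$. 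Fix $t_0\in\Theta$ and set $c_R(x):=G_R(x,t_0)$. A Kuratowski--Ryll-Nardzewski measurable-selection argument now shows that if the set $\{(x,t):G_R(x,t)>c_R(x)\}$ had positive product measure, one could extract a measurable $\theta_l^1$ violating the equality just derived; the same argument with the inequality reversed handles the opposite case. Hence $G_R(x,t)=c_R(x)$ for a.e.\ $(x,t)\in\mathcal{X}\times\Theta$.

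\textbf{Step 2 (differentiation and sign contradiction).} The doubling hypothesis turns $(\Theta,d_\phi,m)$ into a doubling metric measure space, on which Lebesgue differentiation applies to $d_x\in L^1(\Theta)$: for $p$-a.e.\ $x$ and $m$-a.e.\ $t$, $G_R(x,t)/m(B_\phi(t,R))\to d_x(t)$ as $R\downarrow 0$. Intersecting the Step-1 null sets over a countable rational sequence $R_n\downarrow 0$ gives, for a.e.\ $(x,t)$, $d_x(t)=\lim_n c_{R_n}(x)/m(B_\phi(t,R_n))$. Suppose for contradiction that $d_x\not\equiv 0$ for a set of $x$'s of positive $p$-measure. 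Since $\int d_x\,d\theta=0$, on any such $x$ we can choose Lebesgue points $t^\pm$ with $\pm d_x(t^\pm)>0$; the displayed limit then forces $c_{R_n}(x)$ to have opposite signs for all large $n$ (with strictly positive denominators), which is impossible. Therefore $d_x(\theta)=0$ for a.e.\ $\theta$ and $p$-a.e.\ $x$, and Fubini yields $d_x=0$ a.e.\ on $\mathcal{X}\times\Theta$. The technically delicate point is the measurable selection at the end of Step 1, which under the paper's implicit regularity ($\Theta$ Polish, $\phi$ Borel so $G_R$ is jointly Borel) is exactly the content of Kuratowski--Ryll-Nardzewski or Jankov--von Neumann; the remaining work is careful null-set bookkeeping so that Lebesgue differentiation and Step 1 can be chained through a single sequence $R_n\downarrow 0$.
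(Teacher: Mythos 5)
Your proof is correct (modulo the measurable-selection technicality you yourself flag, which is standard under the implicit regularity assumptions that $\Theta$ is Polish and $\phi$ is Borel), and it closes the argument by a genuinely different route than the paper. The paper's proof simply asserts that the hypothesis forces the inner integral $\int_{B_\phi(\theta_l(x),R)} d_x(\theta)\,d\theta$ to vanish for $p$-a.e.\ $x$, and then applies its Lemma~\ref{theorem:localization_with_phi} (Lebesgue differentiation for the doubling metric $d_\phi$) pointwise in $x$; it does not justify the passage from the averaged condition to the pointwise-in-$x$ one, nor the interchange of ``for a.e.\ $x$'' with ``for all $(\theta_l,R)$'' that the lemma's hypotheses require. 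Your Step~1 supplies exactly the missing variational ingredient (splicing two selections over an arbitrary measurable $A$ and subtracting), but it deliberately establishes something weaker than what the paper asserts: that $G_R(x,\cdot)$ is a.e.\ equal to a constant $c_R(x)$ in its second argument, not that this constant is zero. You then compensate in Step~2 with the sign contradiction: since $\int_\Theta d_x\,d\theta=0$, a non-vanishing $d_x$ has Lebesgue points of both signs, and the single sequence $c_{R_n}(x)/m(B_\phi(t,R_n))$ cannot converge to a strictly positive limit at one such point and a strictly negative limit at another. This neatly sidesteps having to prove $c_R(x)=0$ directly, which is genuinely awkward (e.g.\ when $m(\Theta)<\infty$ the hypothesis only pins down the $p$-average of $c_R$). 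Both arguments rest on the same analytic engine --- Lebesgue differentiation for doubling measures --- and both handle the countable-radius/Fubini bookkeeping the same way in spirit; what your version buys is a complete derivation of the pointwise conditions the paper takes for granted, at the price of invoking a selection theorem to upgrade ``equal along every measurable selection'' to ``equal for a.e.\ $(x,t)$.''
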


A crucial feature of the theorem is that the center of the metric ball, $\theta_l(x)$, is allowed to \emph{depend on} $x$ via a localization map.  Intuitively, if there were a region in $\Theta$ with $p(\theta \mid x) \neq q(\theta \mid x)$, one could choose $\theta_l(x)$ to focus on that region, thus contradicting the assumption that the outer integral is 0. Moreover, $p=q$ implies that the supremum of 
\[
\int_{\mathcal{X}} p(x)\,\biggl[\int_{B_{\phi}(\theta_l(x), R)} \bigl(p(\theta \mid x) - q(\theta \mid x)\bigr)\,d\theta\biggr] \, dx
\]
over all measurable choices of $\theta_l(\cdot)$ and all $R>0$, must be 0.  This gives us a natural target for optimization over the choice of the localization function $\theta_l(x)$.  

\subsection{The ball probability rank statistic: a practical condition for mass equivalence}

Theorem \ref{thm:zeroInnerIntegralImpliesEquality} eliminates the need for an exhaustive search over \( x \). But its direct application still appears to require many draws from both \( p(\theta \mid x) \) and \( q(\theta \mid x) \) to verify the equality of mass over metric balls. Testing this condition via Monte Carlo would typically involve repeatedly sampling \( \theta \) from both distributions at the same \( x \) and comparing their local integrals.  This remains computationally demanding even in principle. Moreover, in the typical setup where this methodology might be applied, the situation is asymmetric: \( p(x) \) and \( p(\theta \mid x) \) correspond to a real unknown distribution that generated the training data, meaning that for any observed \( x \), we often have access to only a single corresponding draw from \( p(\theta \mid x) \). By contrast, \( q(\theta \mid x) \) represents a (conditional) generative model that we can query arbitrarily many times for a given \( x \).  A practical formulation must leverage this structure by treating the single "true" \( \theta \) draw as an anchor and evaluating whether the conditional mass of \( q \) aligns with \( p \) in expectation over \( x \).

Our next result establishes precisely this adaptation, ensuring that the comparison suggested by Theorem \ref{thm:zeroInnerIntegralImpliesEquality} can be done feasibly. The basic idea is as follows: we can draw a random sample \( (\theta^*, x) \sim p(\theta, x) \), compute the localization point \( \theta_l(x) \),  and let the radius be implicitly determined as \( R(\theta^*) = d_\phi(\theta_l(x), \theta^*) \).  As the number of samples gets large, this turns out to be equivalent to checking all radii in Theorem \ref{thm:zeroInnerIntegralImpliesEquality}.  We now formalize this equivalence below, temporarily dropping the dependence on the conditioning input $x$ to lighten the notation.

\begin{theorem}
\label{theorem:ball_probability_equivalence}
Let \( p \) and \( q \) be defined as above. Fix a reference point \( \theta_l \in \Theta \), and define the metric ball 
\[
B_r = \{\theta \in \Theta : d_\phi(\theta_l, \theta) \leq r\}.
\]
For any \( \theta^* \in \Theta \), define the \emph{ball probability rank} under \( q \) as
\[
U_q(\theta^*) = P_{\theta \sim q} \bigl( d_\phi(\theta_l, \theta) \leq d_\phi(\theta_l, \theta^*) \bigr).
\]
Then, the condition that \( p \) and \( q \) assign the same probability to all balls centered at \( \theta_l \), i.e.,
\[
p(B_r) = q(B_r) \quad \text{for all radii } 0 \leq r \leq \sup_{\theta' \in \Theta} d(\theta', \theta_l),
\]
is equivalent to the statement that, when \( \theta^* \sim p \), the random variable \( U_q(\theta^*) \) is uniformly distributed on \([0,1]\). That is, checking whether, for all choices of $\theta_l$, \( U_q(\theta^*) \sim \text{Unif}(0,1) \) under \( \theta^* \sim p \) is both necessary and sufficient for \( p = q \).
\end{theorem}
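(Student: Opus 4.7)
The plan is to recognize Theorem~\ref{theorem:ball_probability_equivalence} as a change-of-variables reformulation of the classical probability integral transform (PIT). Let $Y_p := d_\phi(\theta_l, \theta)$ with $\theta \sim p$ and $Y_q := d_\phi(\theta_l, \theta)$ with $\theta \sim q$ be real-valued random variables whose CDFs are exactly $F_p(r) := p(B_r)$ and $F_q(r) := q(B_r)$. The key observation is that $U_q(\theta^*) = F_q(Y_p)$ when $\theta^* \sim p$. Under this identification, the per-$\theta_l$ claim reduces to the familiar one-dimensional statement: $F_p \equiv F_q$ if and only if $F_q(Y_p) \sim \mathrm{Unif}(0,1)$.

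For the forward direction, if $F_p = F_q$ then $F_q(Y_p) = F_p(Y_p)$; because $p$ is absolutely continuous with respect to Lebesgue measure, $F_p$ is continuous and the classical PIT gives $F_p(Y_p) \sim \mathrm{Unif}(0,1)$. For the reverse direction, I would fix $r \geq 0$ and use monotonicity of $F_q$ to obtain the sandwich inclusions $\{Y_p \leq r\} \subseteq \{F_q(Y_p) \leq F_q(r)\}$ and $\{F_q(Y_p) < F_q(r)\} \subseteq \{Y_p < r\}$. Uniformity of $F_q(Y_p)$ together with continuity of $F_q$ gives $P(F_q(Y_p) \leq F_q(r)) = P(F_q(Y_p) < F_q(r)) = F_q(r)$, so the two inclusions yield $F_p(r^-) \leq F_q(r) \leq F_p(r)$, and continuity of $F_p$ forces equality at every $r$. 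To upgrade this per-$\theta_l$ equivalence to the conclusion that $p = q$, I would vary $\theta_l$ over $\Theta$: the resulting equality $p(B_\phi(\theta_l, r)) = q(B_\phi(\theta_l, r))$ for all $\theta_l, r$, combined with the fact that under the doubling condition these $\phi$-balls generate the Borel $\sigma$-algebra on $\Theta$, forces $p = q$ by a standard $\pi$-$\lambda$ argument (or, alternatively, by direct appeal to Theorem~\ref{thm:zeroInnerIntegralImpliesEquality} with constant localization $\theta_l(x) \equiv \theta_l$).

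The main obstacle, and the one place where the theorem's hypotheses do real work beyond the surface-level PIT manipulation, is verifying continuity of $F_p$ and $F_q$, i.e.\ that the pushforward of $p$ (respectively $q$) under $\theta \mapsto d_\phi(\theta_l, \theta)$ carries no atoms. A pathological $\phi$ could in principle collapse a positive-measure $\phi$-sphere around $\theta_l$ to a single distance level, producing an atom that would break both the classical PIT step in the forward direction and the strict inequality in the reverse sandwich. Ruling this out requires leaning on absolute continuity of $p$ and $q$ together with the geometric regularity supplied by the doubling hypothesis on $\phi$ (or a mild non-degeneracy assumption of the kind described in Appendix~\ref{app:phi}), and I expect this---rather than the PIT calculation itself---to be where the proof must be most careful.
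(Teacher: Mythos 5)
Your proposal is correct and follows essentially the same route as the paper: both reduce the statement to the one-dimensional probability integral transform applied to the distance variable $R(\theta)=d_\phi(\theta_l,\theta)$, identifying $U_q(\theta^*)$ with $F_q(R(\theta^*))$ and the ball masses with the CDFs $F_p, F_q$. Two points of comparison are worth recording. First, your reverse direction is actually \emph{more} rigorous than the paper's: the paper asserts that uniformity of $F_q(R(\theta^*))$ under both $p$ and $q$ ``implies $F_q = F_p$'' by appeal to the probability integral transform, without justifying the inversion; your sandwich argument via the inclusions $\{Y_p \le r\}\subseteq\{F_q(Y_p)\le F_q(r)\}$ and $\{F_q(Y_p)<F_q(r)\}\subseteq\{Y_p<r\}$ supplies the missing step. (Minor slip: chaining those inclusions with uniformity gives $F_p(r)\le F_q(r)\le F_p(r^-)\le F_p(r)$, so equality follows directly; you wrote the sandwich with the inequalities reversed, but the conclusion is unaffected.) Second, you correctly flag the one substantive hypothesis the paper's Lemma~\ref{lemma:uniform_ball_probability} uses silently: exact uniformity of $F_q(R(\theta^*))$ requires that the pushforward of $q$ under $\theta\mapsto d_\phi(\theta_l,\theta)$ be atomless, i.e.\ that no $\phi$-sphere carries positive mass; this needs absolute continuity of $q$ together with regularity of $\phi$ (e.g.\ the bi-Lipschitz condition of Appendix~\ref{app:phi}), and the doubling condition alone does not obviously suffice. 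Your final step---passing from equality of ball masses for all $\theta_l$ to $p=q$ via a $\pi$-$\lambda$ argument or Lemma~\ref{theorem:localization_with_phi}---matches how the paper closes the loop.
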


Intuitively, if \( p \) and \( q \) differ, then there must exist some point \( \theta_l \) and some radius \( R \) for which the two distributions assign different mass to the ball \( B(\theta_l, R) \). This mismatch causes the distribution of \( U_q(\theta^*) \) to deviate from uniformity when \( \theta^* \sim p \). Conversely, if \( U_q(\theta^*) \sim \text{Unif}(0,1) \) under \( \theta^* \sim p \) for every choice of \( \theta_l \), then \( p \) and \( q \) must agree on the mass of all such balls, and hence be identical. Thus taken together, Theorems \ref{thm:zeroInnerIntegralImpliesEquality} and \ref{theorem:ball_probability_equivalence} collapse a daunting, high‑dimensional equality‐of‑mass requirement into a one‑dimensional uniformity condition that can serve as the basis for a tractable optimization problem.

\subsection{From local‐mass uniformity to an IPM}

Theorem~\ref{theorem:ball_probability_equivalence} shows comparing the ball–probability rank statistic
$U_{q,x}$ to a uniform distribution gives us a test for whether $p=q$.  The next result shows that, once we optimize over every
allowable localization map $\theta_{l}$, every embedding
$\phi$, and every ball radius, the same uniformity test yields an
integral probability metric (IPM) that we call the
\emph{averaged conditionally localized distance} (ACLD).  Concretely,
let
\[
    \mathcal{B}
    \;=\;
    \Bigl\{
        \mathbf 1_{B_{\phi}(\theta_{l}(x),\,R)}(\theta)
        \;:\;
        \theta_{l}:\mathcal X\!\to\!\Theta,\;
        \phi:\Theta\!\to\!\mathbb R^{m},\;
        R>0
    \Bigr\},
\]
the class of indicator functions of metric balls whose centers depend on~$x$.  The corresponding IPM is
\begin{equation*}
    \operatorname{ACLD}(p,q)
    \;=\;
    \sup_{f\in\mathcal B}
    \Bigl|
        \mathbb E_{x\sim p(x)}
        \Bigl[
            \mathbb E_{\theta\sim p(\theta\mid x)}f(\theta)
            -
            \mathbb E_{\theta\sim q(\theta\mid x)}f(\theta)
        \Bigr]
    \Bigr|.
\end{equation*}

Our next theorem connects this distance to the ball probability rank statistic from Theorem \ref{theorem:ball_probability_equivalence}.

\begin{theorem}[Ball–probability IPM]
\label{thm:ball_probability_distance_clean}
Let\/ $p(\theta\mid x)$ and $q(\theta\mid x)$ be
absolutely continuous conditional densities on a common parameter
space~$\Theta$ for $x\in\mathcal X$, and suppose
$p(x)>0$ a.e.\ on~$\mathcal X$.
For $x$‑dependent) center $\theta_{l}(x)\in\Theta$
and for the metric
$d_{\phi}(\theta,\theta')=\lVert\phi(\theta)-\phi(\theta')\rVert_2$
induced by an embedding $\phi:\Theta\to\mathbb R^{m}$
satisfying the doubling condition, define
\[
    U_{q,x}(\theta^{*})
    \;=\;
    \mathbb P_{\theta\sim q(\theta\mid x)}
    \!\Bigl\{
        d_{\phi}\!\bigl(\theta_{l}(x),\theta\bigr)
        \le
        d_{\phi}\!\bigl(\theta_{l}(x),\theta^{*}\bigr)
    \Bigr\}.
\]
Now let $\widetilde d(p,q)$ be the worst-case Kolmogorov distance from $U_{q,x}(\theta^{*})$ to a uniform distribution (averaged over $x$):
\begin{equation*}
    \widetilde d(p,q)
    \;=\;
    \sup_{\theta_{l},\phi,\alpha\in[0,1]}
    \Bigl|
        \mathbb E_{x\sim p(x)}
        \Bigl[
            \mathbb P_{\theta^{*}\sim p(\theta\mid x)}
            \bigl\{U_{q,x}(\theta^{*})\le\alpha\bigr\}
            -
            \alpha
        \Bigr]
    \Bigr|.
\end{equation*}
Then
\[
    \widetilde d(p,q)
    \;=\;
    \operatorname{ACLD}(p,q).
\]
\end{theorem}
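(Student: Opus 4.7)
My plan is to use the probability integral transform (PIT) to convert the uniformity-based expression defining $\widetilde d(p,q)$ into an IPM-style expression over metric-ball indicators, and then argue that this identification matches the two suprema.

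First I would fix $\theta_l$, $\phi$, and $\alpha\in[0,1]$, and consider the radial CDF $F_{q,x}(r):=\mathbb P_{\theta\sim q(\theta\mid x)}(d_\phi(\theta_l(x),\theta)\le r)$. Absolute continuity of $q(\cdot\mid x)$ combined with the doubling condition guarantees that $F_{q,x}$ is continuous in $r$ for a.e.\ $x$, so the event $\{U_{q,x}(\theta^*)\le\alpha\}$ coincides with $\{\theta^*\in B_\phi(\theta_l(x),F_{q,x}^{-1}(\alpha))\}$. The PIT simultaneously gives $\mathbb E_{\theta\sim q(\theta\mid x)}[\mathbf 1\{U_{q,x}(\theta)\le\alpha\}]=\alpha$ for a.e.\ $x$, so the integrand in the definition of $\widetilde d$ equals $\mathbb E_{p(\theta\mid x)}f-\mathbb E_{q(\theta\mid x)}f$ for $f := \mathbf 1_{B_\phi(\theta_l(x),F_{q,x}^{-1}(\alpha))}$. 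This $f$ is a metric-ball indicator with $x$-dependent center and $x$-dependent radius, so it belongs to $\mathcal B$ under the natural reading in which the radius is allowed to depend on $x$ (just as the center $\theta_l$ does). Taking absolute values and supremizing over $(\theta_l,\phi,\alpha)$ gives $\widetilde d(p,q)\le\operatorname{ACLD}(p,q)$.

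For the reverse direction, I would start from any $f=\mathbf 1_{B_\phi(\theta_l(x),R(x))}\in\mathcal B$ and set $\alpha(x):=F_{q,x}(R(x))$, so that $\mathbb E_{q(\theta\mid x)}f=\alpha(x)$ and the IPM integrand becomes $\mathbb E_x[\mathbb P_{p(\theta\mid x)}\{U_{q,x}\le\alpha(x)\}-\alpha(x)]$. The main obstacle is that $\widetilde d$ takes its supremum over a single constant $\alpha$, whereas the expression above has $\alpha(x)$ varying with $x$ in general. I would close this gap by invoking the freedom in $\phi$: for any target family of $R(x)$-balls one can exhibit a new doubling embedding $\phi'$ (built by composing $\phi$ with an $x$-independent monotone reparameterization of radial distance in $\mathbb R^m$) under which that same family of balls is exactly the constant-$\alpha$ quantile family of $q$ in the $\phi'$ metric. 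Because this reparameterization keeps $\mathcal B$ invariant (doubling is preserved under Lipschitz composition, by the sufficient conditions discussed in Appendix~\ref{app:phi}), it realizes the $\operatorname{ACLD}$ integrand as a $\widetilde d$ integrand at a single constant $\alpha$, yielding $\operatorname{ACLD}(p,q)\le\widetilde d(p,q)$ and hence equality.
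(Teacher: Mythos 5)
Your forward direction is essentially the paper's own argument: the paper defines the quantile radius $R_{\theta_l(x)}(\alpha)=\inf\{r:\mathbb P_{\theta\sim q(\theta\mid x)}(\theta\in B_\phi(\theta_l(x),r))=\alpha\}$, identifies $\{\theta^*:U_{q,x}(\theta^*)\le\alpha\}=B_\phi(\theta_l(x),R_{\theta_l(x)}(\alpha))$, uses the probability integral transform to write $\alpha=\mathbb P_{\theta^*\sim q(\theta\mid x)}\{U_{q,x}(\theta^*)\le\alpha\}$, and then passes between $\sup_\alpha$ and $\sup_R$ as a change of variables. You are also right to flag the point the paper glosses over: the radius $R_{\theta_l(x)}(\alpha)$ produced by a single $\alpha$ is $x$-dependent, whereas $\mathcal B$ as written carries a single $R>0$, so the two suprema range over different families of ball indicators unless the radius in $\mathcal B$ is read as covarying with $x$ through the quantile map.

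The genuine gap is in your proposed repair of the reverse inequality. Composing $\phi$ with an $x$-independent monotone radial reparameterization of $\mathbb R^m$ only relabels radii; it leaves every ball $B_\phi(\theta_l(x),R(x))$ unchanged \emph{as a subset of} $\Theta$, and therefore cannot change $\mathbb E_{\theta\sim q(\theta\mid x)}\bigl[\mathbf 1_{B}\bigr]$. For a family of balls to be the constant-$\alpha$ quantile family of $q$ under \emph{some} metric, you need $F_{q,x}(R(x))\equiv\alpha$ to hold for all $x$ already --- this is a constraint on the sets themselves, not on how you measure distance, so no choice of $\phi'$ can manufacture it. There is also a second obstruction: a single $x$-independent map must act simultaneously around all the centers $\phi(\theta_l(x))$, and cannot rescale by a different factor $R(x)$ at each one (indeed, if $\theta_l(x_1)=\theta_l(x_2)$ but $R(x_1)\ne R(x_2)$, the two balls are distinct sets and cannot both be the radius-$R'$ ball of any one metric). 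So your second paragraph does not close the gap; it replaces the paper's implicit change-of-variables step with a construction that fails. If you want a rigorous reverse direction you must either read $\mathcal B$ as indexed by the quantile level (so that the radius is $x$-dependent by definition of the class), or prove directly that the supremum over constant-$q$-mass ball families dominates the supremum over constant-radius families; neither is accomplished by reparameterizing $\phi$.
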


The theorem establishes that the largest possible deviation
from uniformity that one can provoke in $U_{q,x}$, by freely choosing the localization
function, embedding, and ball radius, is numerically identical to an IPM
built from indicator balls.  Hence training the localization network
$\theta_{l}(x)$ to \emph{maximise} the distance between
$U_{q,x}$ and $\mathrm{U}(0,1)$ is equivalent to computing
$\operatorname{ACLD}(p,q)$.  If the optimizer fails to increase this
distance beyond sampling noise, we have empirical evidence that
$q(\theta\mid x)$ has passed the full mass‑equivalence test implied by
Theorem~\ref{thm:zeroInnerIntegralImpliesEquality}. Conversely, if $U_{q,x}$ is \emph{not} uniformly distributed, then its empirical KS distance to $U(0,1)$ gives us both a $p$-value based on the classical KS test, \textit{and} estimates a distance between $p$ and $q$.

\section{The CoLT Algorithm}
\label{sec:algorithm}

The key insight from Theorem \ref{theorem:ball_probability_equivalence} is that searching for an embedding $\phi$ and localization function $\theta_l(x)$ that \emph{maximally distort} the ball probability rank statistic $U_{q,x}$ away from uniformity is equivalent to detecting regions where $q$ fails to match $p$. This forms the basis of our optimization procedure. We represent both the metric embedding $\phi$ and the localization function as neural networks, $\theta_l(x; \psi)$ with learnable parameters $\psi$. Our strategy is roughly as follows:

\begin{itemize}
    \item \textbf{Generate a rank statistic:} Draw a minibatch of "anchor" points $(\theta_i^{*}, x_i)_{i=1}^B$ from $p(\theta, x)$, the true joint distribution. By construction, each $(\theta_i^{*} \mid x_i)$ has conditional distribution $p(\theta \mid x_i)$. For each anchor point $i$, sample $M$ synthetic draws $\{\tilde{\theta}_{ij}\}_{j=1}^M$ from $q(\theta \mid x_i)$, and compute the empirical ball probability rank statistic:
        \[
            \hat U_i(\psi, \phi) = \frac{1}{M} \sum_{j=1}^M \mathbf{1}\Big[d_\phi\big(\theta_l(x_i; \psi), \tilde{\theta}_{ij}\big) \le d_\phi\big(\theta_l(x_i; \psi), \theta_i^{*}\big)\Big].
        \]

    \item \textbf{Measure non-uniformity:} As a loss, we use a negative divergence from a uniform distribution, $L(\psi, \phi) = -D(\hat U_i(\psi, \phi), \text{Uniform})$. We discuss the choice of divergence below.

    \item \textbf{Optimize:} Gradient descent is applied to the loss function. If $p = q$, optimization will stall, as no choice of $(\phi, \theta_l(x; \psi))$ will yield substantial deviation from uniformity. Otherwise, the optimizer finds a localization map that exposes the failure of $q$.
\end{itemize}

This approach is detailed in Algorithm~\ref{alg:colt_training} (training phase) and Algorithm~\ref{alg:colt_test} (testing phase). We first apply Algorithm~\ref{alg:colt_training} to train the embedding network $\phi$ and localization network $\theta_l$, aiming to maximize the discrepancy between the empirical $\hat U_i$ values and the uniform distribution.  Then with the trained networks and a test set of $\{(\theta_i, x_i)\}$, we compute a test statistic and corresponding $p$-value using the one-sample Kolmogorov-Smirnov (KS) test in Algorithm~\ref{alg:colt_test}.  

\begin{algorithm}[h]
\caption{Conditional Localization Test (CoLT): Training Phase}
\label{alg:colt_training}
\begin{algorithmic}[1]
\Procedure{CoLT}{$\{(\theta_i,x_i)\}_{i=1}^N \sim p(\theta \mid x)p(x)$, sampling distribution $q(\theta \mid x)$}
\State Generate $K$ samples $\{\theta_{ij}\} \sim q(\theta \mid x_i)$ for $i \in [N], j \in [K]$
\State Define $d_\phi(\theta, \theta') = \|\phi(\theta) - \phi(\theta')\|_2$ 
\State Initialize $\phi, \theta_l(x, \psi)$ as neural networks
\While{not converged}
    \For{$i = 1, \dots, N$}
        \State $\theta_l \leftarrow \theta_l(x_i; \psi)$
        \State $U_i = \frac{1}{K} \sum_{j=1}^K \mathbf{1}\{d_\phi(\theta_{ij}, \theta_l) < d_\phi(\theta_i, \theta_l)\}$
    \EndFor
    \State $L(\psi, \phi) = -D(U_i, \text{Uniform})$ \Comment{Maximize divergence}
    \State Perform gradient update on $\psi$, $\phi$
\EndWhile
\State \textbf{Return} $\theta_l$, $\phi$
\EndProcedure
\end{algorithmic}
\end{algorithm}

\begin{algorithm}[h]
\caption{Conditional Localization Test (CoLT): Testing Phase}
\label{alg:colt_test}
\begin{algorithmic}[1]
\Procedure{CoLT}{$\{(\theta_i,x_i)\}_{i=1}^N \sim p(\theta \mid x)p(x)$, sampling distribution $q(\theta \mid x)$, $\theta_l$, $\phi$}
\State Generate $K$ samples $\{\theta_{ij}\} \sim q(\theta \mid x_i)$ for $i \in [N], j \in [K]$
\State Define $d_\phi(\theta, \theta') = \|\phi(\theta) - \phi(\theta')\|_2$ 
\For{$i = 1, \dots, N$}
    \State $\theta_l \leftarrow \theta_l(x_i)$
    \State $U_i = \frac{1}{K} \sum_{j=1}^K \mathbf{1}\{d_\phi(\theta_{ij}, \theta_l) < d_\phi(\theta_i, \theta_l)\}$
\EndFor
\State $t, p \leftarrow \text{KS test}(\{U_1, \ldots, U_N\}, \text{Uniform})$ \Comment{test statistic \& $p$-value}
\State \textbf{Return} $t, p$
\EndProcedure
\end{algorithmic}
\end{algorithm}

We make three remarks about this algorithm.  First, because $U_i$ involves an indicator function, gradients cannot propagate directly; we therefore use the Straight-Through Estimator (STE) trick~\cite{bengio2013estimating} to enable gradient-based optimization. Second, we represent the distance embedding network $\phi$ as a neural network due to its flexibility and capacity to approximate a wide range of transformations. Moreover, neural networks are typically Lipschitz-continuous under mild conditions~\cite{gouk2021regularisation}, which ensures that the doubling condition (Definition~\ref{def:doubling}) is satisfied; see Appendix \ref{app:phi}. Alternatively, a fixed, non-trainable form of $\phi$ can be specified, and our theoretical guarantees will still hold, but power may be reduced. For example, setting $\phi$ as the identity reduces $d(\cdot, \cdot)$ to the $\ell_2$ distance.

Third, in Algorithm~\ref{alg:colt_training}, various divergence measures can be used to quantify the discrepancy between the empirical distribution of rank statistics \( U_i \) and the uniform distribution. While the Kolmogorov–Smirnov (KS) distance is a natural choice motivated directly by our theory, it is not ideal for gradient-based optimization, which would need to propagate gradients through sorting and max operations. To address this, we instead use Sinkhorn divergence~\cite{cuturi2013sinkhorn}, an entropy-regularized version of Wasserstein distance that retains geometric sensitivity while offering a smooth objective. Empirically, we find that Sinkhorn divergence leads to stable optimization and good performance. We emphasize that Sinkhorn divergence is used only during the training phase to learn the localization and embedding maps. At test time, we use the KS statistic, as suggested by our theory, to compute \( p \)-values based on the empirical rank distribution.

\section{Experiments}
\label{sec:main_exp_config}

\paragraph{Benchmark tasks.} To evaluate CoLT against established NPT methods, we use a suite of benchmark tasks introduced by~\cite{chen_etal_NPTbench2024}\footnote{\url{https://github.com/TianyuCodings/NPTBench}} (details in Appendix \ref{app:perturb}). Each benchmark defines a reference posterior \( p(\theta \mid x) \), then introduces a family of perturbed alternatives \( q(\theta \mid x; \alpha) \), where the scalar parameter \( \alpha \geq 0 \) controls the severity of deviation. As \( \alpha \) increases, so does the discrepancy between \( p \) and \( q \), allowing us to generate smooth performance curves that quantify the sensitivity of each NPT method.

We evaluate CoLT on two such benchmark families. The first is based on multivariate Gaussian posteriors with data-dependent mean and covariance. Specifically, we sample \( x \sim \mathcal{N}(\mathbf{1}_m, I_m) \) and define
\[
p(\theta \mid x) = \mathcal{N}(\mu_x, \Sigma_x), \quad \mu_x = W_1 x, \quad \Sigma_x = |W_2^\top x| \cdot \Sigma,
\]
where \( W_1 \in \mathbb{R}^{s \times m} \) and \( W_2 \in \mathbb{R}^{s \times 1} \) are fixed matrices constructed from i.i.d.~Gaussians, and \( \Sigma \) is a Toeplitz matrix with entries \(\Sigma_{ij} =  \rho^{|i - j|} \), using \( \rho = 0.9 \). The alternative \( q(\theta \mid x; \alpha) \) is then constructed by applying structured perturbations either to $\mu_x$ or $\Sigma_x$, as detailed in Appendix~\ref{app:perturb}. This setup allows us to simulate NPE errors such as mean shifts, covariance inflation, or distortions of multimodal structure. See Figure \ref{fig:curvature_snapshot}, Panels A-B.

The second family of benchmarks introduces geometric complexity by drawing latent Gaussian samples according to the same recipe as above, and then applying a nonlinear transformation, 
$\theta := f(\tilde{\theta}) \equiv A \ h(B \tilde{\theta})$, 
where \( \tilde{\theta} \sim \mathcal{N}(\mu_x, \Sigma_x) \) , \( h \) is a coordinate-wise sine nonlinearity, and \( A \in \mathcal{R}^{d \times d} \), \( B \in\mathcal{R}^{d \times s} \) are fixed matrices. This creates a posterior distribution \( p(\theta \mid x) \) concentrated on a smooth, curved manifold of intrinsic dimension $s$ in \( \mathbb{R}^d \). To generate \( q \), perturbations are applied in the latent Gaussian space (i.e. before transformation). See Figure \ref{fig:curvature_snapshot}, Panels C-D.



\begin{figure}[t!]
    \centering
    \vspace{-0.3cm}
    \begin{subfigure}[b]{0.24\linewidth}
        \centering
        \includegraphics[width=\linewidth,height=3cm]{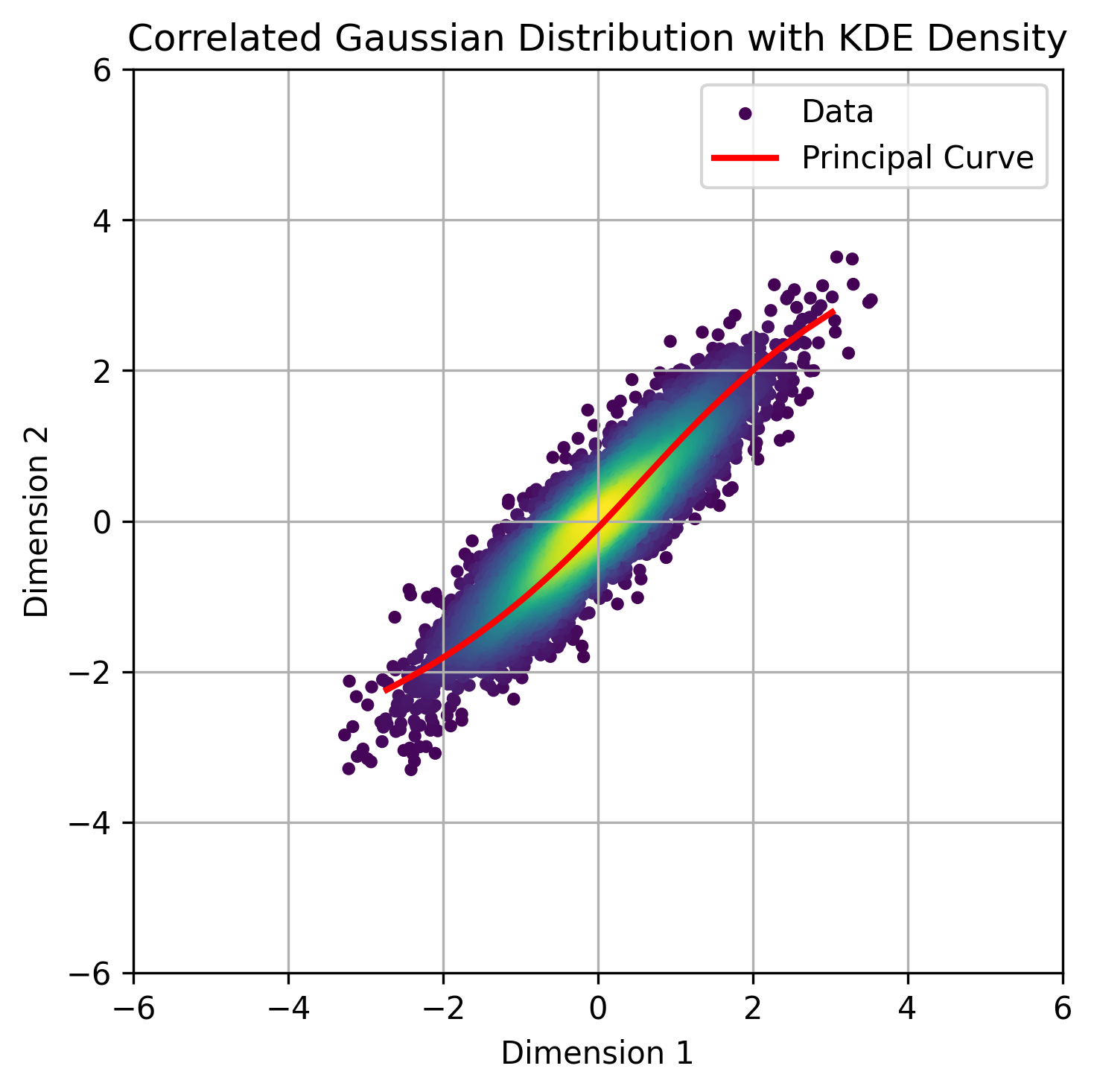}
        \caption{Bivariate normal}
        \label{fig:curve_gaussian}
    \end{subfigure}%
    \begin{subfigure}[b]{0.24\linewidth}
        \centering
        \includegraphics[width=\linewidth,height=3cm]{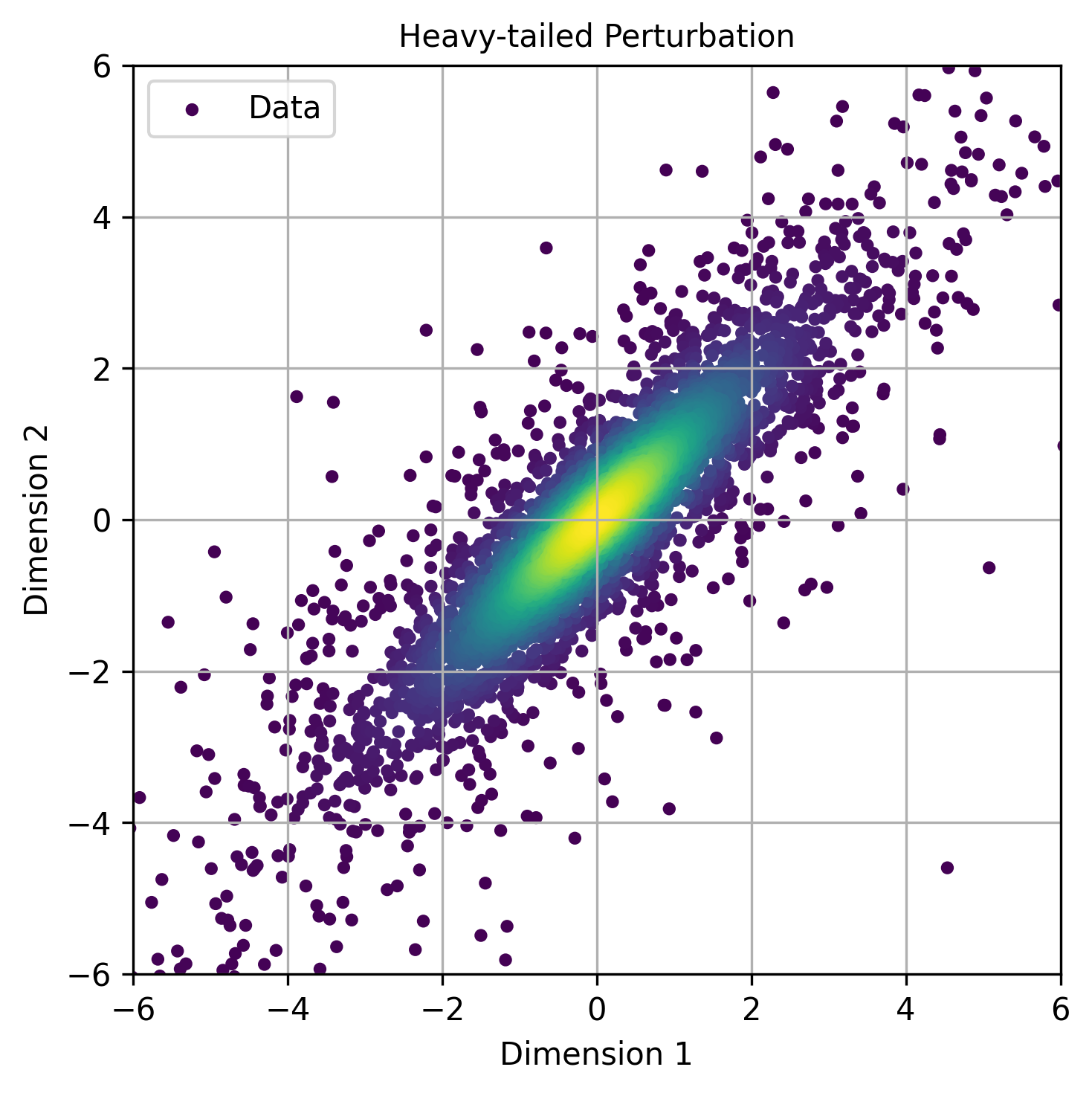}
        \caption{Perturbed normal}
        \label{fig:curve_gaussian_t}
    \end{subfigure}%
    \begin{subfigure}[b]{0.24\linewidth}
        \centering
        \includegraphics[width=\linewidth,height=3cm]{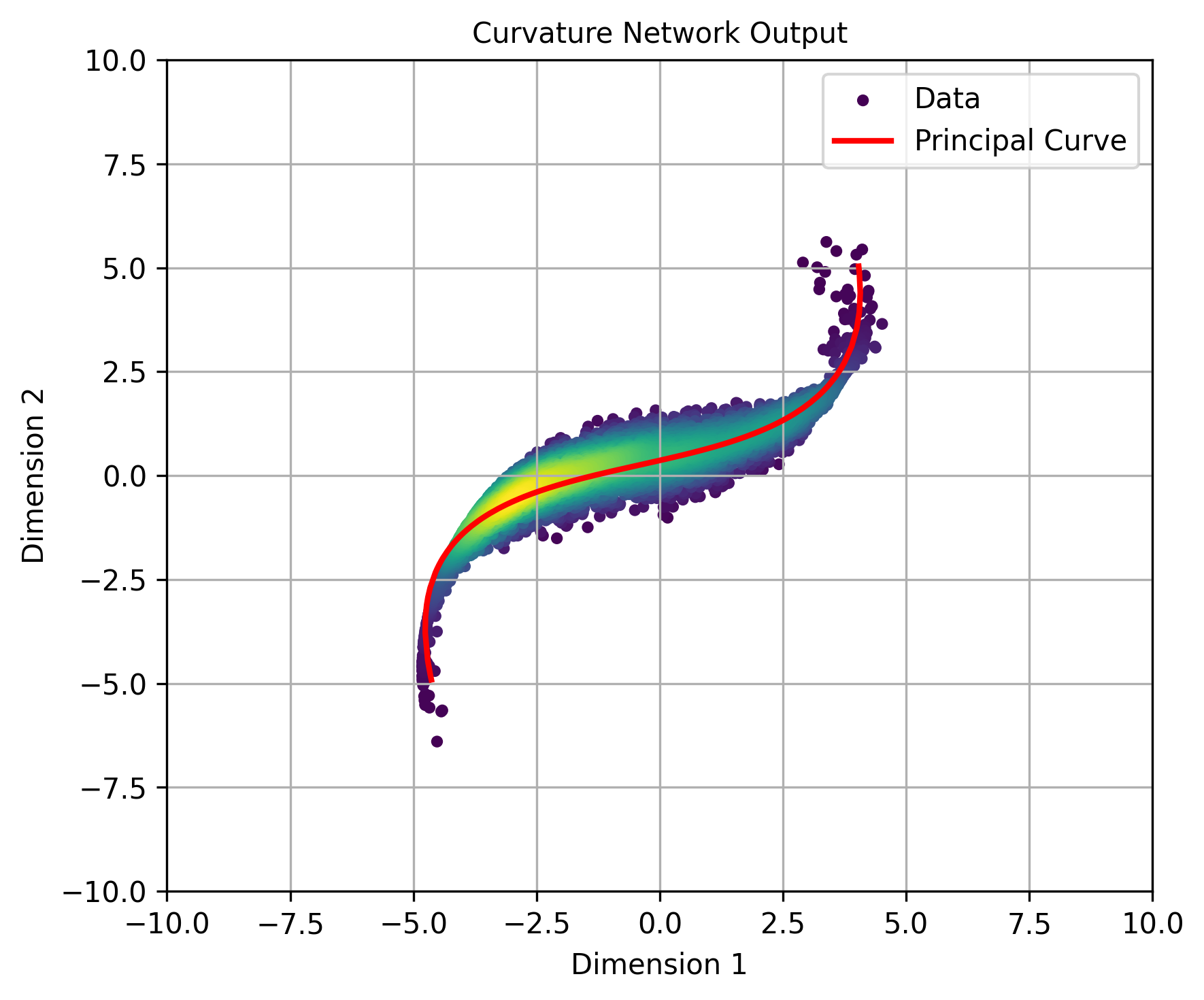}
        \caption{Latent normal}
        \label{fig:curve_transform}
    \end{subfigure}
        \begin{subfigure}[b]{0.27\linewidth}
        \centering
        \includegraphics[width=\linewidth,height=3cm]{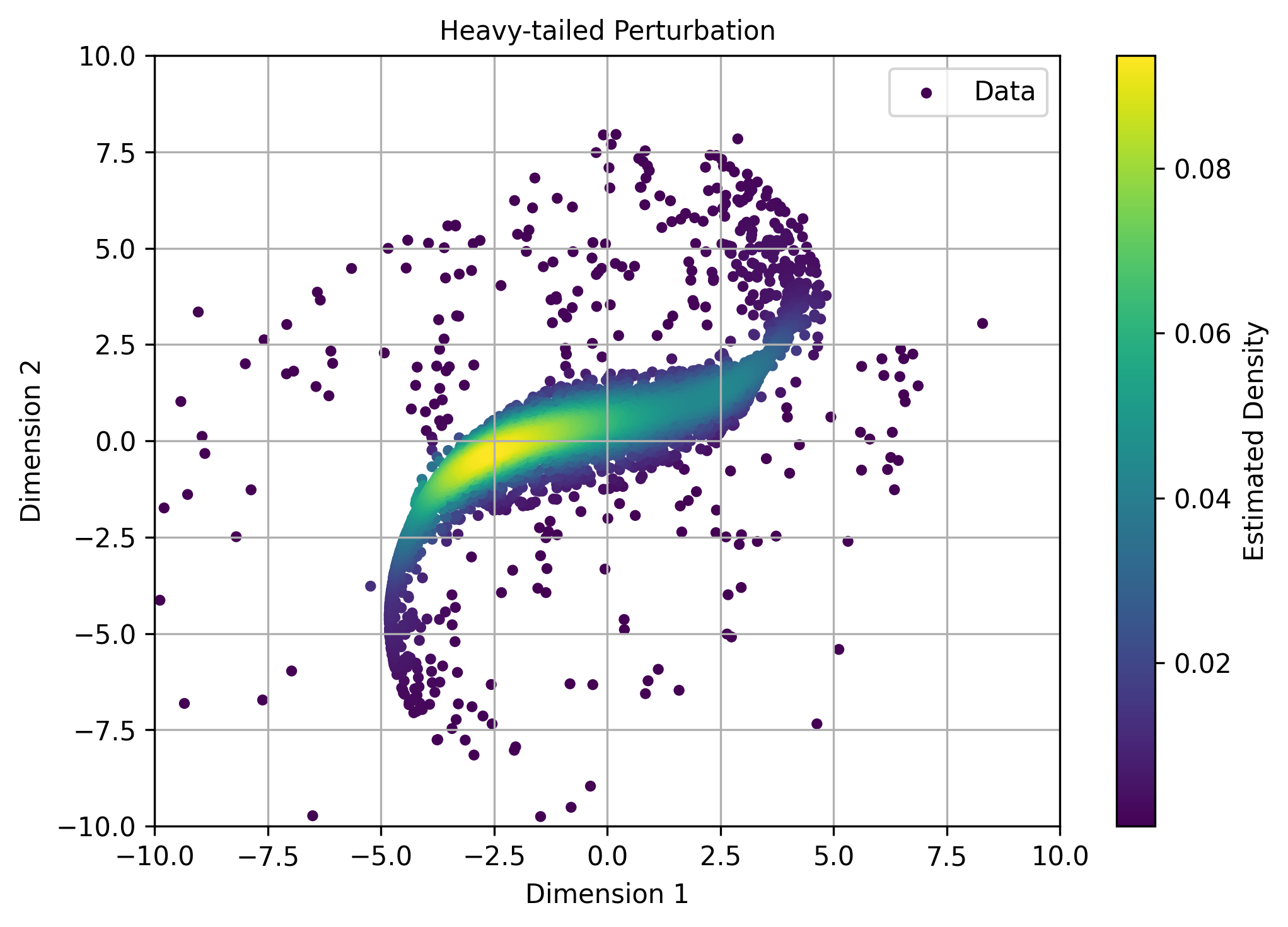}
        \caption{\footnotesize{Perturbed latent normal}}
        \label{fig:curve_transform_t}
    \end{subfigure}
    
\caption{
(A) Bivariate Gaussian with correlation $0.9$. (B) Example perturbation of (A) to yield heavier tails. (C) Latent Gaussian with transformation. (D) Example perturbation of (C) with heavier tails in the latent space. In our benchmarks, CoLT and similar methods are tasked with distinguishing the ground-truth distributions (A and C) from perturbed variations (B and D, respectively). These are both large perturbations (large $\alpha$) and should be easy to detect; smaller $\alpha$ yields subtler perturbations. Details of the perturbation schemes are provided in Appendix~\ref{app:perturb}.
}
    
    \vspace{-0.2cm}
    \label{fig:curvature_snapshot}
\end{figure}

\paragraph{Baselines and settings.} For our method, we evaluate two variants: \textbf{CoLT Full,} where both the embedding network $\phi$ and the localization network $\theta_l$ are jointly optimized; and \textbf{CoLT ID}, where $\phi$ is the identity and only the localization network $\theta_l$ is trained. We assess both Type I error (at $\alpha = 0$) and statistical power (for $\alpha > 0$) across all methods. Both versions of CoLT are compared against three established approaches: C2ST~\cite{lopez2016revisiting}, SBC~\cite{talts2018validating}, and TARP~\cite{lemos2023sampling}. To enable fair and meaningful comparisons, we adapt each baseline to produce a $p$-value, as follows. For C2ST, we sample one $\theta$ from $q(\theta \mid x)$ for each $x$ to create balanced training and test datasets, using the asymptotically normal test statistic described in \cite{lopez2016revisiting}. For SBC, we conduct the KS test between the rank statistics and the uniform distribution for each dimension independently, followed by Bonferroni correction to control for multiple testing. For TARP, we select random reference points and the TARP test statistic to perform a KS test against the uniform distribution.  

In both benchmark families, we vary the input, parameter, and latent dimensions (\( m, s, d \)) and report power as a function of \( \alpha \). We sample 100 pairs $\{(\theta_i, x_i)\}$ from the true joint distribution $p(\theta \mid x)p(x)$, along with 500 samples from $q(\theta \mid x)$ for each corresponding $x$ during training. After training, we evaluate a method's power by sampling 200 additional batches with the same sampling budget. For all methods, we set a nominal Type-I error rate of 5\%. We repeat experiments with three random seeds and report averages. Further implementation details and design choices are in the Appendix \ref{app:experiment_details}.

\label{sec:main_exp}
\paragraph{Results.} Figure~\ref{fig:main_exp_combined} summarizes the performance of the various testing methods across both benchmark families and four specific perturbation types: covariance scaling, anisotropic covariance distortion, heavy-tailed deviations via \( t \)-distributions, and the introduction of additional modes. In the simpler Gaussian benchmark (top row), both variants of CoLT (Full and ID) match or exceed the performance of C2ST while consistently outperforming SBC and TARP. CoLT ID---which measures mass over fixed Euclidean balls---performs well in cases like covariance scaling and additional modes, where the geometry of the discrepancy aligns well with the ambient space. C2ST also performs reasonably in these non-curved settings, particularly for tail adjustment and additional modes.

In contrast, the manifold benchmark (bottom row) reveals a clear advantage for CoLT Full, which learns a flexible embedding function to localize discrepancies. As with the toy example in Figure \ref{fig:toy_example_tree}, this learned geometry appears essential in detecting errors, especially under tail adjustments and anisotropic distortions. CoLT ID, which lacks this geometric adaptability, performs notably worse than CoLT Full in these settings, although it still generally meets or exceeds the performance of other methods. These results highlight an important inductive bias: while fixed Euclidean balls suffice for flat posteriors, learned embeddings are crucial for detecting structured mismatch on curved or low-dimensional manifolds. Taken together, the results confirm that CoLT is competitive across a range of settings and is especially effective when for posteriors with complex geometry.

Additional experiments appear in Appendix~\ref{app:diffusion}, which demonstrates our method's application to diffusion-based generative posteriors, and in Appendix~\ref{app:additional_exp}, which includes expanded results across more perturbation types and dimensional configurations. We provide the code at \texttt{\href{https://github.com/TianyuCodings/CoLT}{https://github.com/TianyuCodings/CoLT}}.

\begin{figure}[t!]
    \centering
    \subfloat[Selected power curves on the perturbed Gaussian benchmark family.]{
        \includegraphics[width=1.02\linewidth]{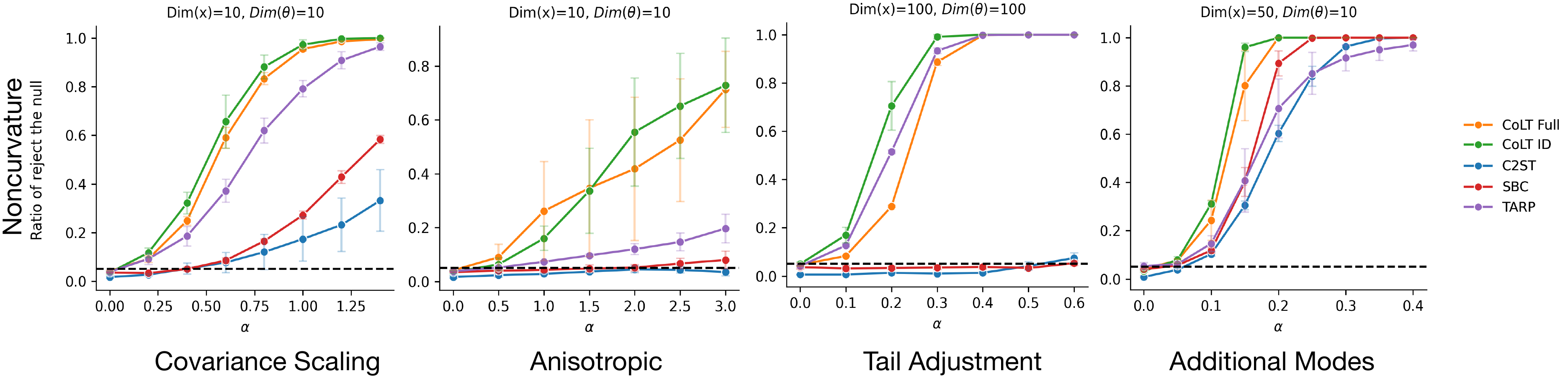}
        \label{fig:main_exp_noncurve}
    }\\[1ex]
    \subfloat[Selected power curves on the manifold benchmark family.]{
        \includegraphics[width=1.02\linewidth]{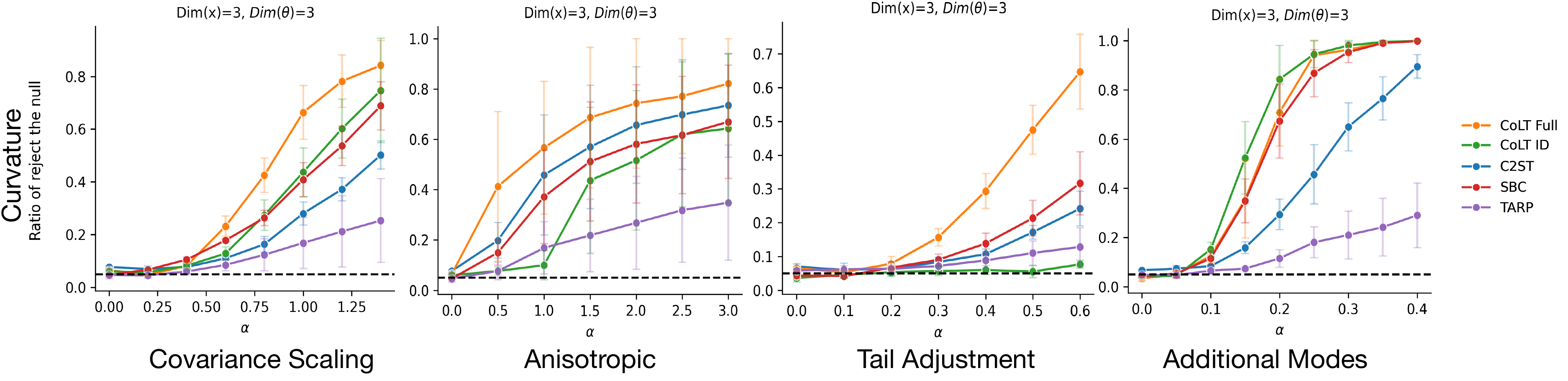}
        \label{fig:main_exp_curve}
    }
    \caption{Statistical power curves (high is better) for four perturbation types under both benchmark families: (a) Gaussian posterior with data-dependent mean and covariance, and (b) its nonlinear transformation onto a curved manifold. Each panel refers to a specific perturbation type, with the horizontal axis ($\alpha$) describing the severity of the perturbation. Selected settings for $(s,d,m)$ are shown here, with results on a wider variety of settings shown in the Appendix \ref{app:perturb}.}
    \label{fig:main_exp_combined}
    \vspace{-0.2cm}
\end{figure}







\paragraph{Discussion and limitations.} Our theoretical and empirical results establish CoLT as a principled and practical approach for detecting local discrepancies between conditional distributions, with state-of-the-art performance compared to existing methods. But CoLT does have limitations. The method relies on learning both a localization function \( \theta_l(x) \) and an embedding \( \phi(\theta) \), introducing inductive bias through architectural and optimization choices. If either component is underparameterized or poorly trained, CoLT may fail to detect real discrepancies. Its sensitivity also depends on the quality of the rank statistic, which can degrade with limited samples. And while CoLT yields a continuous IPM-style metric, interpreting this scalar, especially in high dimensions, can be challenging, as the underlying IPM function class is non-standard and implicitly defined by the learned components.

The benchmarking framework also has its limitations. Although designed to reflect realistic failure modes in NPE, the benchmarks are inherently synthetic and simplified. Perturbations are applied in controlled, parametric ways that may not capture the full complexity of real-world approximation errors. Moreover, the true posterior is always known, enabling rigorous evaluation but diverging from practical settings where ground truth is inaccessible. Despite these caveats, the suite provides a clear, extensible testbed, probing a number of common failure modes of NPE methods.

\label{sec:limitation}

\bibliography{ref}



\newpage
\appendix

\section{Notes on $\phi$}
\label{app:phi}
The key requirement on $\phi$ is that it satisfies the doubling condition with respect to Lebesgue measure. One sufficient condition for this to hold is that $\phi$ be a bi-Lipschitz function, where there exist constants $C_1, C_2$ such that, for all $\theta_1, \theta_2$,
$$
C_1 \Vert \theta_1 - \theta_2 \Vert_2 \leq \Vert \phi(\theta_1) - \phi(\theta_2) \Vert_2 \leq C_2 \Vert \theta_1 - \theta_2 \Vert_2
$$
If this condition holds, then the doubling condition holds with doubling constant \( C = \left(\frac{C_2}{C_1}\right)^D > 0 \). To see this, observe that for such a $\phi$, the metric balls satisfy
\[
B_{\text{Eucl}}(\theta_l, C_1 R) \subseteq B(\theta_l, R) \subseteq B_{\text{Eucl}}(\theta_l, C_2 R).
\]
Then we have
\[
m\bigl(B_{\mathrm{Eucl}}(\theta_l,C_{1}R)\bigr) 
= 
C_{1}^{D}\,m\bigl(B_{\mathrm{Eucl}}(\theta_l,R)\bigr)
\quad\text{and}\quad
m\bigl(B_{\mathrm{Eucl}}(\theta_l,C_{2}R)\bigr) 
= 
C_{2}^{D}\,m\bigl(B_{\mathrm{Eucl}}(\theta_l,R)\bigr).
\]
Hence, from
\[
B_{\mathrm{Eucl}}(\theta,C_{1}R)
\;\subseteq\;
B_{\phi}(\theta,R)
\;\subseteq\;
B_{\mathrm{Eucl}}(\theta,C_{2}R),
\]
it follows that
\[
C_{1}^{D}\,m\bigl(B_{\mathrm{Eucl}}(\theta,R)\bigr)
\;\le\;
m\bigl(B_{\phi}(\theta,R)\bigr)
\;\le\;
C_{2}^{D}\,m\bigl(B_{\mathrm{Eucl}}(\theta,R)\bigr),
\]
which shows that \(m\bigl(B_{\phi}(\theta,R)\bigr)\) scales like \(R^{D}\) up to a constant factor. 

\section{Proofs}

\subsection{A preliminary lemma}

To prove Theorem \ref{thm:zeroInnerIntegralImpliesEquality} we first need the following lemma, which adapts standard measure-theoretic results to the case of a non-Euclidean metric based on an embedding function that satisfies the doubling condition in Definition \ref{def:doubling}.  

\begin{lemma}
\label{theorem:localization_with_phi}
Let \( p(\theta \mid x) \) and \( q(\theta \mid x) \) be defined as above, and let $\phi$ be an embedding function  $\phi: \Theta \to \mathbb{R}^m$ that induces a metric \( d_\phi \) on \(\Theta\), defined as  
\[
d_\phi(\theta_1, \theta_2) = \|\phi(\theta_1) - \phi(\theta_2)\|_2.
\]  
Further, assume that \(\phi\) satisfies the doubling condition (\ref{eqn:doubling_condition}) with respect to Lebesgue measure.

Suppose that for almost every \( \theta_l \in \Theta \), we have  
\[
\int_{B(\theta_l, R)} p(\theta \mid x)  \, d\theta = \int_{B(\theta_l, R)} q(\theta \mid x)  \, d\theta 
\]
for all metric balls \( B(\theta_l, R) \), defined as  
\[
B(\theta_l, R) = \{ \theta \in \Theta : d_{\phi}(\theta, \theta_l) \leq R \} \, .
\]
Then $p(\theta \mid x) = q(\theta \mid x)$ almost everywhere ($\theta$).
\end{lemma}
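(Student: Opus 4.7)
The plan is to recognize this lemma as a form of the Lebesgue differentiation theorem adapted from the Euclidean metric to the $d_\phi$ metric, where the doubling condition supplies the exact geometric hypothesis that makes the classical differentiation machinery go through. Concretely, define the signed $L^1$ function $d_x(\theta) = p(\theta\mid x) - q(\theta\mid x)$ for fixed $x$; the hypothesis says that the integral of $d_x$ over every $\phi$-ball centered at (almost) every $\theta_l$ vanishes. If we can show that almost every $\theta$ is a \emph{Lebesgue point} of $d_x$ with respect to the $\phi$-ball system, then dividing by $m(B_\phi(\theta,R))$ and sending $R\to 0$ immediately forces $d_x(\theta)=0$ a.e.

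The first step is to verify that the hypothesis actually yields $\int_{B_\phi(\theta_l,R)} d_x\,d\theta = 0$ for \emph{every} $\theta_l\in\Theta$, not only a.e.\ $\theta_l$. By continuity of the set function $\theta_l \mapsto \int_{B_\phi(\theta_l,R)} d_x\,d\theta$ in $\theta_l$ (which follows from dominated convergence, since $d_x\in L^1$ and $\phi$-balls vary continuously in symmetric difference with $\theta_l$), the zero condition propagates from the a.e.\ set to all of $\Theta$. The second step is to invoke the Lebesgue differentiation theorem in the doubling metric-measure space $(\Theta, d_\phi, m)$: under Definition~\ref{def:doubling}, Vitali's covering lemma is available, which gives a weak $(1,1)$ bound for the Hardy--Littlewood maximal operator $M_\phi f(\theta) = \sup_{R>0} \frac{1}{m(B_\phi(\theta,R))}\int_{B_\phi(\theta,R)} |f|\,d\theta$. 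Standard density arguments with continuous functions (which are dense in $L^1$ and for which the differentiation limit is trivial) then upgrade this maximal inequality to the full Lebesgue differentiation statement: for a.e.\ $\theta\in\Theta$,
\[
\lim_{R\to 0^+} \frac{1}{m(B_\phi(\theta,R))} \int_{B_\phi(\theta,R)} d_x(\theta')\,d\theta' \;=\; d_x(\theta).
\]

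Combining the two steps finishes the proof: the left-hand side is identically zero by the first step, so $d_x(\theta)=0$ for a.e.\ $\theta$, which is the claim. The main obstacle is the second step, namely the adaptation of Lebesgue differentiation from Euclidean to $\phi$-balls. The doubling condition is precisely the ingredient that makes Vitali's $5R$-covering argument work in the non-Euclidean setting: it controls how many comparably sized $\phi$-balls can overlap and hence yields the weak $(1,1)$ maximal bound. Once that is in place, density of $C_c(\Theta)$ in $L^1(\Theta,m)$ together with the trivial pointwise convergence for continuous $f$ produces the full differentiation theorem. The first step, propagating the zero-integral condition from a.e.\ $\theta_l$ to every $\theta_l$, is routine dominated convergence; any residual technical wrinkle lies in confirming $\phi$-balls vary regularly enough in $\theta_l$, which again follows from $\phi$ being a bona fide function into $\mathbb{R}^m$ together with the doubling control on their Lebesgue volume.
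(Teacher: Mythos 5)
Your proof follows essentially the same route as the paper's: define $d_x = p(\cdot\mid x) - q(\cdot\mid x)$, invoke the Lebesgue differentiation theorem for the doubling ball system $B_\phi(\theta,R)$, and conclude $d_x = 0$ a.e.\ from the vanishing ball integrals. You are in fact slightly more careful than the paper, which simply cites the differentiation theorem and silently upgrades the hypothesis from ``almost every $\theta_l$'' to ``all $\theta_l$''---a step you handle explicitly via continuity of $\theta_l \mapsto \int_{B_\phi(\theta_l,R)} d_x\,d\theta$.
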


\begin{proof}
Define the \emph{difference function}  
\[
d_x(\theta) = p(\theta \mid x) - q(\theta \mid x).
\]
The goal is to show that \( d_x(\theta) = 0 \) almost everywhere in \(\Theta\) using the given integral condition.

Because $\phi$ is assumed to satisfy the doubling condition with respect to Lebesgue measure, we have for some $C>0$ that
\[
m(B_\phi(\theta, 2R)) \leq C m(B_\phi(\theta, R))
\]
Now since \( d_x(\theta) \) is locally integrable (as it is a difference of probability densities), we apply the Lebesgue Differentiation Theorem for doubling measures \citep{stein1971fourier}, which implies:  
\[
\lim_{R \to 0} \frac{1}{m(B_\phi(\theta, R))} \int_{B_\phi(\theta, R)} d_x(\theta') \, d\theta' = d_x(\theta) \quad \text{for a.e. } \theta.
\]

However, by assumption, we know that for all $\theta$ and all sufficiently small \( R \),
\[
\int_{B_\phi(\theta, R)} d_x(\theta') \, d\theta' = 0.
\]
Since \( m(B(\theta, R)) > 0 \), dividing by the Lebesgue measure of the ball and taking the limit yields:
\[
d_x(\theta) = 0 \quad \text{for almost every } \theta.
\]
Since \( d_x(\theta) = 0 \) a.e., it follows that \( p(\theta \mid x) = q(\theta \mid x) \) almost everywhere in \(\Theta\).
\end{proof}

\subsection{Proof of Theorem \ref{thm:zeroInnerIntegralImpliesEquality}}

For any localization map $\theta_{l}(x)$ and radius $R$, the inner integral in (Eq.\eqref{eqn:conditional_integral_vanish}) is zero for $p(x)$-a.e.\ $x$. Fix such an $x$; then the hypotheses of Lemma \ref{theorem:localization_with_phi} (with that $x$) hold, so $p(\theta\mid x)=q(\theta\mid x)$ a.e. ($\theta$).  Because $p(x)>0$ a.e., this must be true for almost every $x$.

\subsection{Proof of Theorem \ref{theorem:ball_probability_equivalence}}

We first need the following lemma.

\begin{lemma}
\label{lemma:uniform_ball_probability}
Let \( (\Theta, d) \) be a metric space, and let \( \theta_l \in \Theta \) be fixed. Define the function
\[
   R(\theta) = d_\phi(\theta_l, \theta), \quad \theta \in \Theta.
\]
Now let \( q \) be a probability measure on \( \Theta \). For any \( \theta^* \in \Theta \), define the \textbf{ball probability rank} of $\theta^\star$ under $q$ as
\[
  U_q(\theta^*) = P_{\theta \sim q} \bigl( R(\theta) \le R(\theta^*) \bigr)
  = P_{\theta \sim q} \bigl( d_\phi(\theta_l, \theta) \le d_\phi(\theta_l, \theta^*) \bigr).
\]
Then, if we also have that \( \theta^* \sim q \), then the random variable \( U_q(\theta^*) \) is distributed as \( \text{Uniform}(0,1) \), i.e.,
\[
  P_{\theta^* \sim q} \bigl( U_q(\theta^*) \le u \bigr) = u, \quad \forall u \in [0,1].
\]
\end{lemma}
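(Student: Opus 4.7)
The plan is to recognize Lemma~\ref{lemma:uniform_ball_probability} as a direct instance of the probability integral transform (PIT) applied to the one-dimensional scalar random variable $R(\theta) = d_\phi(\theta_l, \theta)$. Let $F_q(r) = \mathbb{P}_{\theta \sim q}(R(\theta) \leq r)$ denote the pushforward CDF of $R(\cdot)$ under $q$. Then by the very definition of the ball probability rank, $U_q(\theta^*) = F_q(R(\theta^*))$. When $\theta^* \sim q$, the scalar $R(\theta^*)$ is distributed according to $F_q$, so the claim reduces to the classical statement that $F_q(Y) \sim \mathrm{Unif}(0,1)$ whenever $Y$ has continuous CDF $F_q$.

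First I would establish that $F_q$ is continuous on $[0, \infty)$. A jump of $F_q$ at some $r_0 > 0$ would correspond to positive $q$-mass on the $\phi$-sphere $S_{r_0} = \{\theta \in \Theta : d_\phi(\theta_l, \theta) = r_0\}$, which is the $\phi$-preimage of a Euclidean sphere in $\mathbb{R}^m$. Under the standing assumption that $q$ is absolutely continuous with respect to Lebesgue measure, it suffices to show that $S_{r_0}$ has Lebesgue measure zero in $\Theta$. This follows from the regularity of $\phi$ spelled out in Appendix~\ref{app:phi}: a bi-Lipschitz map pulls back Lebesgue-null sets to Lebesgue-null sets, and Euclidean spheres are Lebesgue-null in $\mathbb{R}^m$.

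Second, once continuity of $F_q$ is in hand, the PIT argument is routine. For any $u \in [0,1]$, let $F_q^{-1}(u) = \inf\{r : F_q(r) \geq u\}$ denote the generalized quantile. Using continuity of $F_q$, one obtains $\mathbb{P}(F_q(R(\theta^*)) \leq u) = \mathbb{P}(R(\theta^*) \leq F_q^{-1}(u)) = F_q(F_q^{-1}(u)) = u$, which is exactly the desired uniformity.

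The main technical obstacle is verifying continuity of $F_q$ in potentially degenerate situations, e.g., if $\phi$ were to collapse a positive-measure region of $\Theta$ onto a single point, or if $\phi(\Theta)$ were to concentrate positive mass on a sphere in $\mathbb{R}^m$. Under the doubling hypothesis used throughout the paper together with the absolute continuity of $q$, neither pathology can arise, and the PIT applies verbatim. In a fully general setting one could instead invoke the randomized-rank variant $F_q(R(\theta^*)-) + V \cdot (F_q(R(\theta^*)) - F_q(R(\theta^*)-))$ with an independent $V \sim \mathrm{Unif}(0,1)$ to recover uniformity, but this refinement is unnecessary here.
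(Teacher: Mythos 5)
Your proof is correct and follows essentially the same route as the paper's: identify $U_q(\theta^*) = F_q(R(\theta^*))$ and invoke the probability integral transform. You are in fact more careful than the paper's own proof, which applies the PIT without checking that the pushforward CDF $F_q$ is atomless; your observation that this requires the $\phi$-spheres to be $q$-null (which could fail for a degenerate embedding) addresses a gap the paper leaves implicit.
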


\begin{proof}
    
Define \( F_R \) as the cumulative distribution function (CDF) of the random variable \( R_q(\theta) = d(\theta_l, \theta) \), where \( \theta \sim q \), i.e.,
\[
  F_R(r) = P_{\theta \sim q} \bigl( R(\theta) \le r \bigr).
\]
By definition of \( U(\theta^*) \), we have
\[
  U_q(\theta^*) = P_{\theta \sim q} \bigl( R(\theta) \le R(\theta^*) \bigr) = F_R(R(\theta^*)).
\]
But by assumption, we have $\theta^\star \sim q$. Accordingly, \( R(\theta^*) \) is itself a random variable drawn from the distribution whose CDF is \( F_R \), it follows from the probability integral transform that for any localization point $\theta_l$,
\begin{equation}
\label{eqn:uniform_condition}
  P_{\theta^\star \sim q} \left( U(\theta^\star) \leq u \right) = u \, .
\end{equation}
Thus, \( U_q(\theta^*) \sim \text{Uniform}(0,1) \), completing the proof.

\end{proof}

The key observation from Lemma~\ref{lemma:uniform_ball_probability} is that the probability mass assigned by \( p \) to the ball of this radius, centered at \( \theta_l \), follows a uniform distribution when \( \theta^* \sim p \). Thus, if equation~\eqref{eqn:uniform_condition} (which states that \( U(\theta^*) \sim \text{Uniform}(0,1) \)) holds for all possible choices of \( \theta_l \), then the conditional distributions \( p(\theta \mid x) \) and \( q(\theta \mid x) \) must be identical. Intuitively, this is because the process of drawing \( \theta^* \) and measuring probability mass within its corresponding ball implicitly tests equality of mass across all possible radii in a structured way. If the distributions \( p \) and \( q \) were different, there would exist some localization point \( \theta_l \) where the resulting uniformity condition fails, revealing a discrepancy in their induced probability measures.

With this lemma in place, we can now prove Theorem \ref{theorem:ball_probability_equivalence}.

\begin{proof}
(\(\Rightarrow\)) Suppose that \( p(B_r) = q(B_r) \) for all \( r \geq 0 \). Consider the cumulative distribution function (CDF) of the distance variable \( R(\theta^*) = d(\theta_l, \theta^*) \), when \( \theta^* \sim p \):
\[
F_p(r) = P_{\theta^* \sim p} \bigl( R(\theta^*) \leq r \bigr) = p(B_r).
\]
Similarly, under \( \theta^* \sim q \), the corresponding CDF is
\[
F_q(r) = P_{\theta^* \sim q} \bigl( R(\theta^*) \leq r \bigr) = q(B_r).
\]
By assumption, these two CDFs are identical, i.e., \( F_p(r) = F_q(r) \) for all \( r \). Now, by the definition of \( U_q(\theta^*) \),
\[
U_q(\theta^*) = P_{\theta \sim q} \bigl( R(\theta) \leq R(\theta^*) \bigr) = F_q(R(\theta^*)).
\]
Since \( F_q = F_p \), we obtain
\[
U_q(\theta^*) = F_p(R(\theta^*)).
\]
From Lemma~\ref{lemma:uniform_ball_probability}, we know that \( F_p(R(\theta^*)) \sim \text{Uniform}(0,1) \) when \( \theta^* \sim p \), which implies that \( U_q(\theta^*) \sim \text{Uniform}(0,1) \) under \( \theta^* \sim p \). Thus, the distributions of \( U_q(\theta^*) \) under \( p \) and \( q \) must be identical.

(\(\Leftarrow\)) Now suppose that \( U_q(\theta^*) \sim U_p(\theta^*) \). Then, for any \( u \in [0,1] \),
\[
P_{\theta^* \sim p} \bigl( U_q(\theta^*) \leq u \bigr) = P_{\theta^* \sim q} \bigl( U_q(\theta^*) \leq u \bigr).
\]
Rewriting in terms of the CDFs, this implies
\[
P_{\theta^* \sim p} \bigl( F_q(R(\theta^*)) \leq u \bigr) = P_{\theta^* \sim q} \bigl( F_q(R(\theta^*)) \leq u \bigr).
\]
By the probability integral transform, since \( F_q(R(\theta^*)) \sim \text{Uniform}(0,1) \) under both \( p \) and \( q \), it follows that \( F_q(R(\theta^*)) = F_p(R(\theta^*)) \) in distribution. This means that \( F_p = F_q \), implying
\[
p(B_r) = q(B_r), \quad \forall r.
\]
Thus, the probability assigned to each metric ball is identical under \( p \) and \( q \).

\end{proof}

\subsection{Proof of Theorem \ref{thm:ball_probability_distance_clean}}

From Lemma \ref{lemma:uniform_ball_probability}, we have that $U_q(\theta^* \mid x)$ follows a uniform distribution when $\theta^* \sim q(\theta^* \mid x)$, thus satisfying
    $$\Pp{U \sim \p{Unif}{0, 1}}{U \leq \alpha} = \Pp{\theta^* \sim q(\theta^* \mid x)}{U_q(\theta^* \mid x) \leq \alpha}.$$
    Next, define the radius $R_{\theta_l(x)}(\alpha)$ as follows $$R_{\theta_l(x)}(\alpha):= \inf\bc{r: \Pp{\theta \sim q(\theta \mid x)}{\theta \in B_\phi(\theta_l(x), r)} = \alpha}.$$ 
    Since $q(\theta \mid x)$ is absolutely continuous with respect to the Lebesgue measure, the mapping $\alpha \mapsto R_{\theta_l(x)}(\alpha)$ is a bijection. Additionally, by definition of $U_q(\theta^* \mid x)$, it follows that
    \[\bc{\theta^*: U_q(\theta^* \mid x) \leq \alpha} = B_\phi(\theta_l(x), R_{\theta_l(x)}(\alpha)).\]
    Combining these observations, we obtain:
    \begin{align*}
        \tilde d(p, q) &= \sup_{\theta_l(\cdot), \phi, \alpha} \abs{\Ee{x \sim p(x)}{\Pp{\theta^* \sim p(\theta^* \mid x)}{U_q(\theta^* \mid x) \leq \alpha} - \Pp{\theta^* \sim q(\theta^* \mid x)}{U_q(\theta^* \mid x) \leq \alpha}}}\\
        &=\sup_{\theta_l(\cdot), \phi, \alpha} \abs{\Ee{x \sim p(x)}{\Pp{\theta^* \sim p(\theta^* \mid x)}{B_\phi(\theta_l(x), R_{\theta_l(x)}(\alpha))} - \Pp{\theta^* \sim q(\theta^* \mid x)}{B_\phi(\theta_l(x), R_{\theta_l(x)}(\alpha))}}}\\
        &=\sup_{\theta_l(\cdot), \phi, R} \abs{\Ee{x \sim p(x)}{\Pp{\theta^* \sim p(\theta^* \mid x)}{B_\phi(\theta_l(x), R)} - \Pp{\theta^* \sim q(\theta^* \mid x)}{B_\phi(\theta_l(x), R)}}} = \p{ACLD}{p, q}\\
    \end{align*}

\newpage
\section{Experiments Details}
\label{app:experiment_details}
\subsection{Toy Example}
\label{app:tree}
We construct a synthetic ground-truth data distribution in $\mathbb{R}^2$ by defining a {Gaussian mixture model (GMM)} whose components are procedurally placed according to a recursive branching process. We use the code from paper \cite{karras2024guiding} whose generation process can be found in \url{https://github.com/NVlabs/edm2/blob/main/toy_example.py}. To maintain the completeness of our paper, we include the generation process here.

\subsubsection*{Gaussian Mixture Representation}

The base distribution is modeled as a weighted sum of multivariate Gaussian components:
\[
p(x) = \sum_{k=1}^K \phi_k \, \mathcal{N}(x \mid \mu_k, \Sigma_k+\sigma^2I),
\]
where:
\begin{itemize}
    \item $\phi_k \in \mathbb{R}_+$ are normalized mixture weights,
    \item $\mu_k \in \mathbb{R}^2$ are the component means,
    \item $\Sigma_k \in \mathbb{R}^{2 \times 2}$ are the component covariance matrices,
    \item $\sigma\in \mathbb{R}_+$ where we set $\sigma=1\mathrm{e}-2$ for $p(\theta\mid x)$ and $(1+\alpha)\cdot\sigma$ for $q(\theta\mid x)$.
\end{itemize}
Each component is assigned a weight and covariance that decays with tree depth, producing finer-scale detail at deeper recursion levels.

\subsubsection*{Recursive Tree-Structured Composition}

The mixture components are positioned according to a recursive tree-like structure:
\begin{itemize}
    \item Two primary classes (\texttt{A} and \texttt{B}) are generated, each initialized at the same root origin and with distinct initial angles (e.g., $\pi/4$ and $5\pi/4$).
    \item At each recursion level (up to depth 7), a branch is extended in a given direction, and eight Gaussian components are placed uniformly along the branch.
    \item Each component’s mean is computed by interpolating along the current direction vector, and the covariance is anisotropically scaled to align with the branch’s orientation.
    \item Each branch spawns two child branches recursively, with angles perturbed stochastically to simulate natural variability.
\end{itemize}

\subsubsection*{Component Covariance Structure}

The covariance of each Gaussian component is constructed as:
\[
\Sigma = \left( \mathbf{d} \mathbf{d}^\top + (\mathbf{I} - \mathbf{d} \mathbf{d}^\top)\cdot \text{thick}^2 \right) \cdot \mathrm{diag}(\text{size})^2,
\]
where $\mathbf{d}$ is the normalized direction of the branch, \texttt{thick} controls orthogonal spread, and \texttt{size} scales with recursion depth.

This construction ensures that components are elongated along the branch direction and narrow orthogonal to it, producing tree-like density patterns.

\subsection{Experiments with perturbed Gaussians}
\label{app:perturb}
We give further details on the experiments in Section~\ref{sec:main_exp}.  For these benchmarks, we construct a ground-truth conditional distribution by first simulating latent Gaussian variables with $x$-dependent means and variances:
\[
\tilde{\theta} \sim \mathcal{N}(\mu_x, \Sigma_x), \quad \mu_x = W_1 x, \quad \Sigma_x = |W_2^\top x| \cdot \Sigma,
\]
where \( \tilde{\theta} \in \mathbb{R}^s \), \( W_1 \in \mathbb{R}^{s \times m} \), and \( W_2 \in \mathbb{R}^{s \times 1} \) are fixed weight matrices with standard normal entries. The matrix \( \Sigma \in \mathbb{R}^{s \times s} \) is a fixed Toeplitz correlation matrix, with entries \( \Sigma_{ij} = \text{corr}^{|i - j|} \) and \( \text{corr} = 0.9 \) to simulate strong structured correlations.

In all cases we sample conditioning inputs \( x \sim \mathcal{N}(\mathbf{1}_m, I_m) \).

\begin{table}[t!]
\centering
\renewcommand{\arraystretch}{1.5}
\caption{Six types of perturbations used to assess the sensitivity of CoLT.}
\label{tab:perturb}
\resizebox{\linewidth}{!}{%
\begin{tabular}{|c|c|p{9.5cm}|}
\hline
\textbf{\( p(\theta \mid x) \)} & \textbf{\( q(\theta \mid x) \)} & \textbf{Explanation} \\
\hline

\multirow{5}{*}{\(\mathcal{N}(\mu_x, \Sigma_x)\)} 
& \(\mathcal{N}((1+\alpha)\mu_x, \Sigma_x)\) 
& \textbf{Mean Shift:} Introduces a systematic bias by shifting the mean. \\
\cline{2-3}

& \(\mathcal{N}(\mu_x, (1+\alpha)\Sigma_x)\) 
& \textbf{Covariance Scaling:} Uniformly inflates the variance. \\
\cline{2-3}

& \(\mathcal{N}(\mu_x, \Sigma_x + \alpha \Delta)\) 
& \textbf{Anisotropic Covariance Perturbation:} Adds variability along the minimum-variance eigenvector of $\Sigma_x$: \( \Delta = \mathbf{v}_{\min} \mathbf{v}_{\min}^\top \). \\
\cline{2-3}

& \(t_\nu(\mu_x, \Sigma_x)\) 
& \textbf{Tail Adjustment via \( t \)-Distribution:} Introduces heavier tails, with degrees of freedom \( \nu = 1/(\alpha + \epsilon) \), approaching Gaussian as \( \alpha \to 0 \). \\
\cline{2-3}

& \((1 - \alpha)\mathcal{N}(\mu_x, \Sigma_x) + \alpha\, \mathcal{N}(-\mu_x, \Sigma_x)\) 
& \textbf{Additional Modes:} $q$ introduces spurious multimodality. \\
\hline

\((1 - \alpha)\mathcal{N}(\mu_x, \Sigma_x) + \alpha\, \mathcal{N}(-\mu_x, \Sigma_x)\) 
& \(\mathcal{N}(\mu_x, \Sigma_x)\) 
& \textbf{Mode Collapse:} $q$ loses multi-modal structure. \\
\hline
\end{tabular}
}
\end{table}

\subsection{Diffusion Training and Sampling Procedure}
\label{app:diffusion}

\paragraph{Diffusion Model}

To further evaluate our method in the context of generative posterior estimation, we construct a benchmark based on a diffusion model. We begin by sampling \( x \sim \text{Unif}[1.5\pi, 4.5\pi] \) and defining \( \theta = (x \cos(x),\, x \sin(x)) \), which induces a highly nonlinear and non-Gaussian posterior structure. We then train a diffusion model to approximate this distribution. During evaluation, we generate samples via reverse diffusion and treat the output after 20 reverse steps as a notional ground-truth posterior (Figure~\ref{fig:diff20})—not because it is the true target distribution, but because it represents the best available approximation produced by the model. Outputs from earlier steps (\( 20 - \alpha \)) serve as degraded approximations to this endpoint. This yields a generative-model-based, monotonic perturbation scheme parameterized by \( \alpha \) (Figures~\ref{fig:diff16} and \ref{fig:diff15}).

Beyond simple rejection, our method provides fine-grained quantitative insight: we use the test statistic \( t \) from Algorithm~\ref{alg:colt_test} to measure how close each approximate posterior (from fewer reverse steps) is to the reference posterior (20-step output). This allows us to quantify posterior degradation as a function of reverse diffusion progress. As shown in Figure~\ref{fig:diff_ks}, the test statistic increases monotonically with $\alpha$, reflecting growing divergence from the true posterior.

\begin{figure}[htbp]
    \centering
    \begin{subfigure}[b]{0.24\linewidth}
        \includegraphics[width=\linewidth,height=2.5cm]{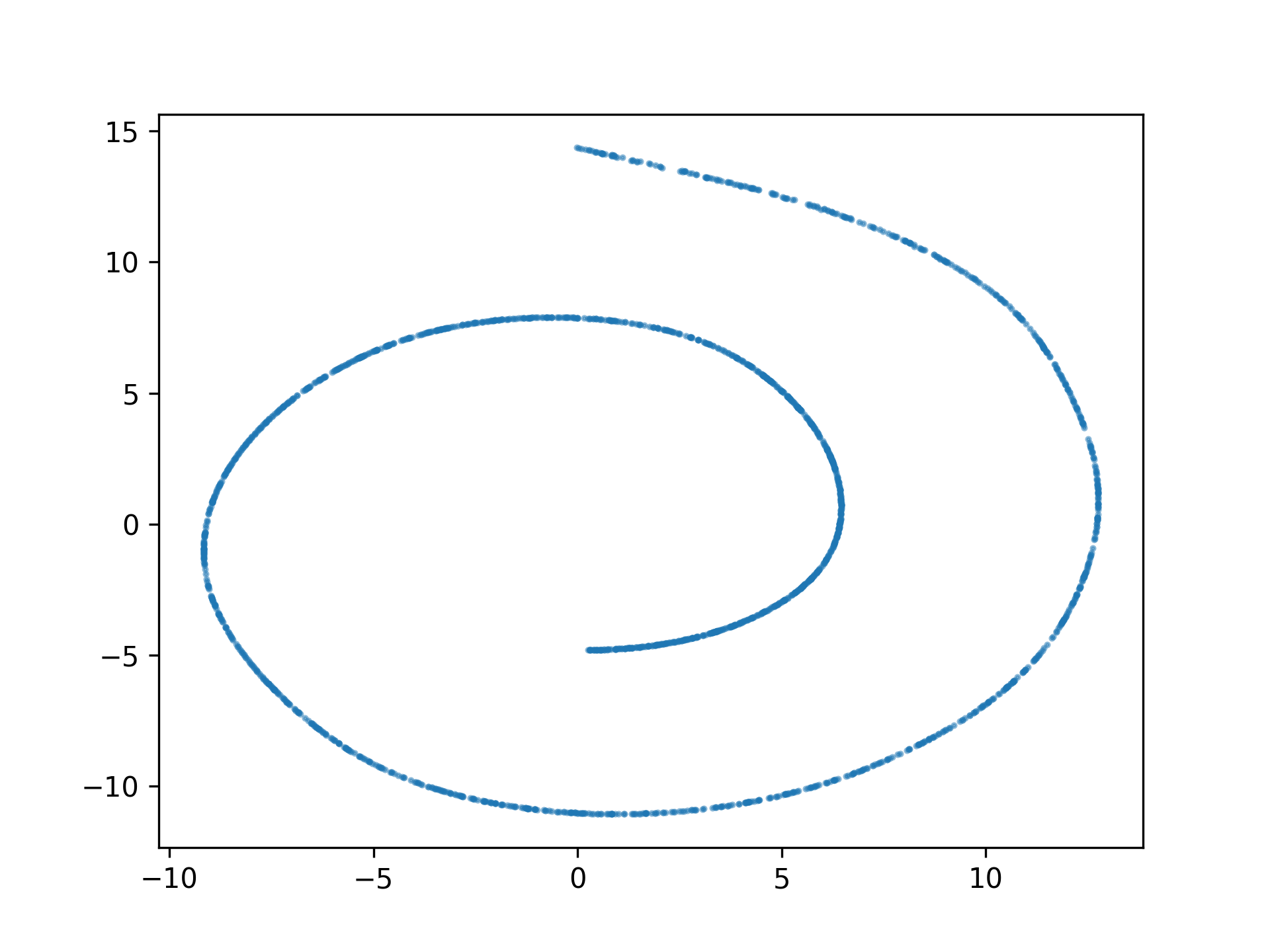}
        \caption{Step = 20 ($\alpha=0$)}
        \label{fig:diff20}
    \end{subfigure}
    \begin{subfigure}[b]{0.24\linewidth}
        \includegraphics[width=\linewidth,height=2.5cm]{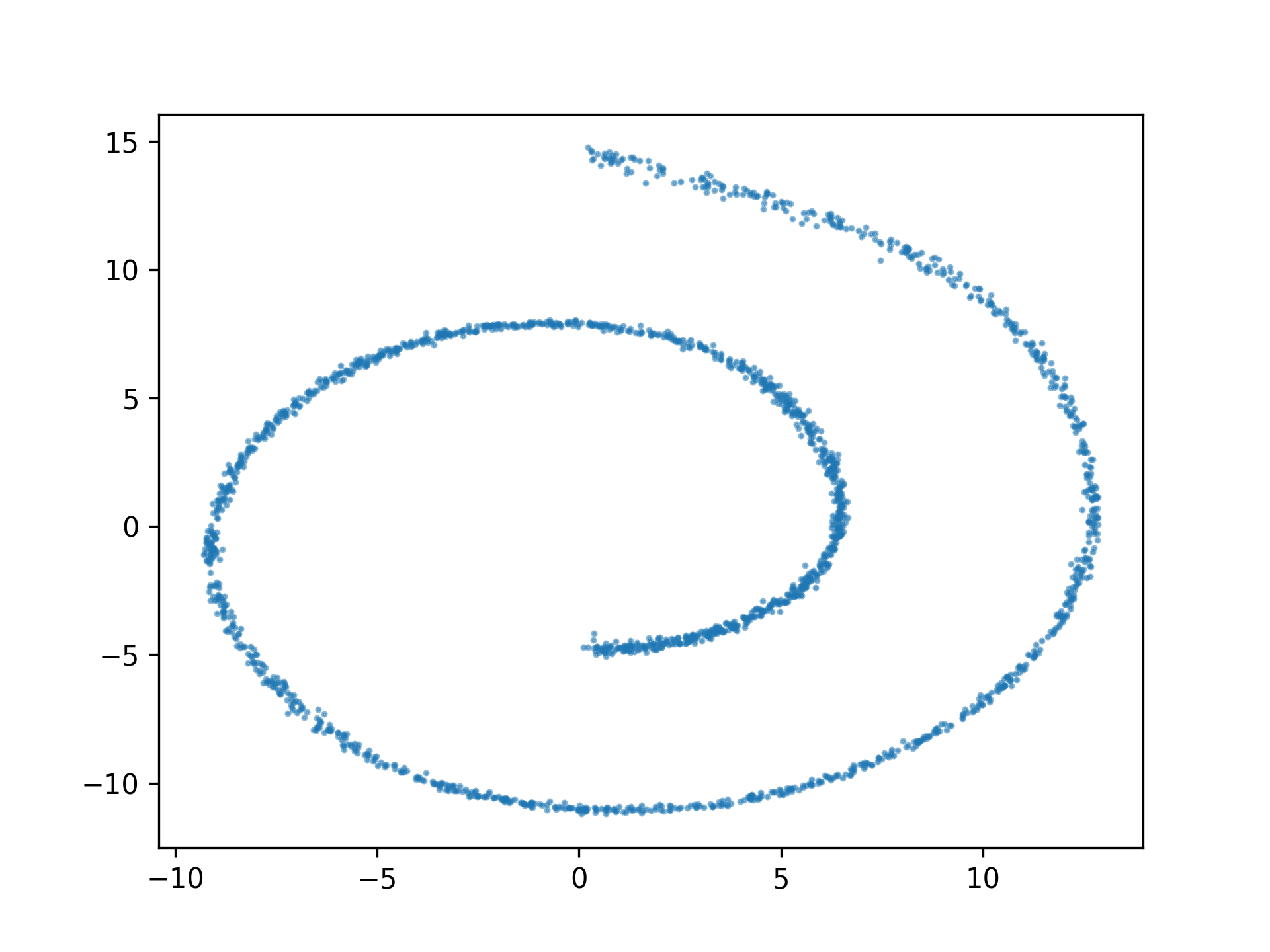}
        \caption{Step = 16 ($\alpha=4$)}
        \label{fig:diff16}
    \end{subfigure}
    \begin{subfigure}[b]{0.24\linewidth}
        \includegraphics[width=\linewidth,height=2.5cm]{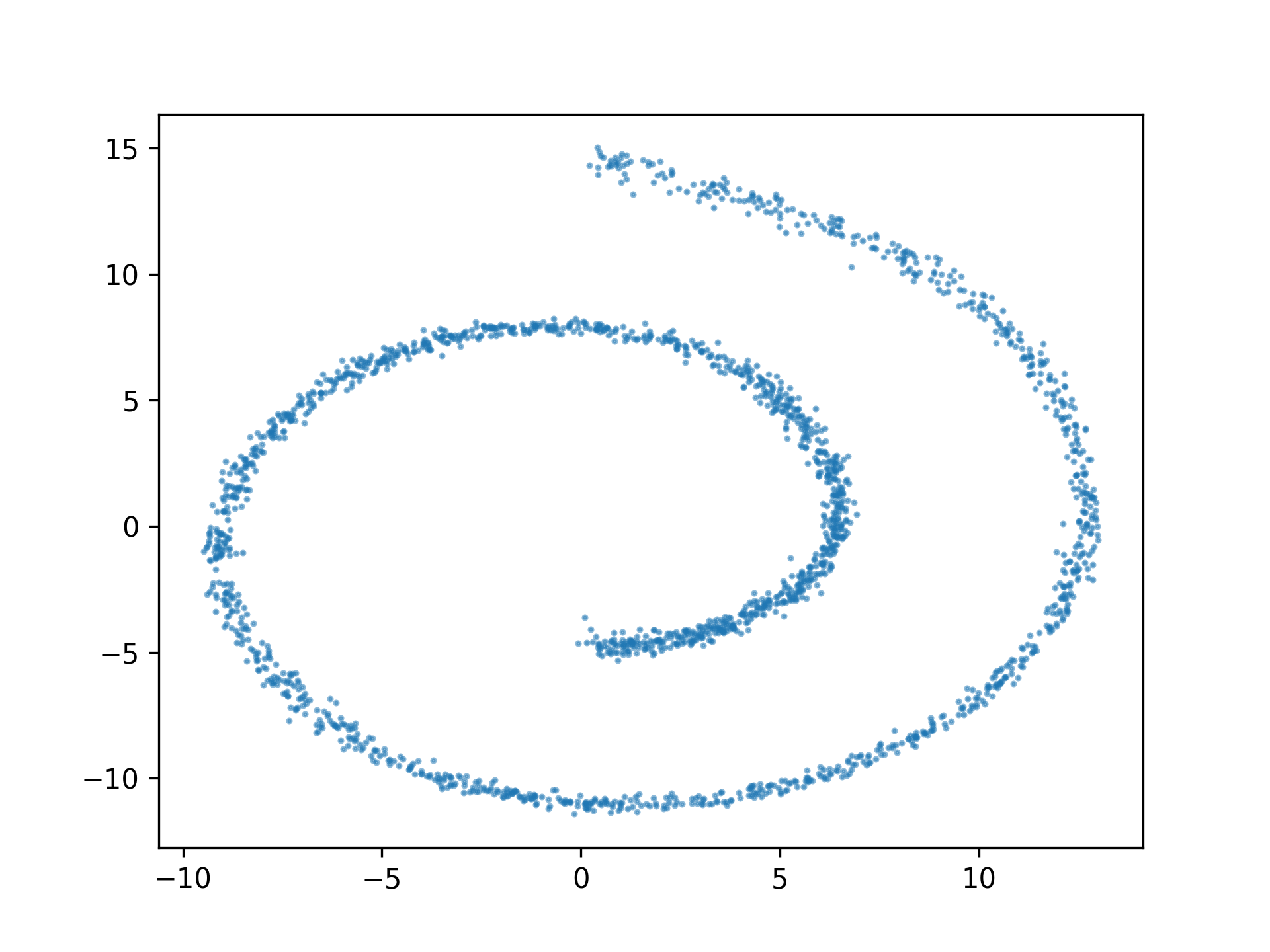}
        \caption{Step = 15 ($\alpha=5$)}
        \label{fig:diff15}
    \end{subfigure}
    \begin{subfigure}[b]{0.24\linewidth}
        \includegraphics[width=\linewidth,height=2.5cm]{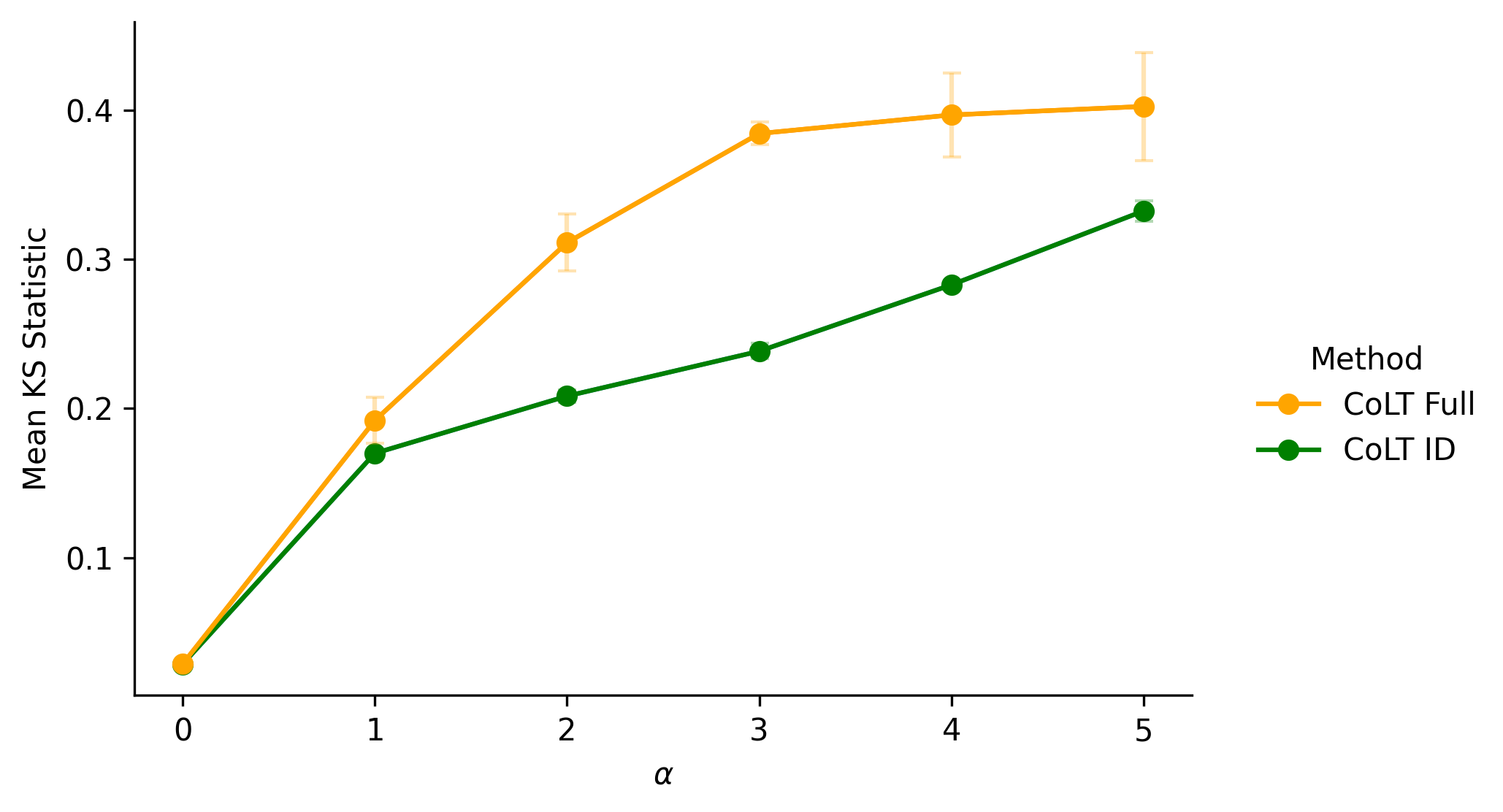}
        \caption{Test statistic $t$}
        \label{fig:diff_ks}
    \end{subfigure}
    \caption{
    Posterior estimation using a diffusion model. The output at step 20 is treated as the ground-truth posterior. Reducing the number of reverse steps results in increasingly degraded approximations. Our method reflects this degradation with a monotonically increasing test statistic, indicating sensitivity to model quality.
    }
    \label{fig:main}
\end{figure}

As described in Section~\ref{sec:main_exp}, we sample $x \sim \text{Unif}[1.5\pi, 4.5\pi]$ and define $\theta = (x \cos(x),\, x \sin(x))$. This produces a nontrivial two-dimensional manifold for posterior inference.

We generate 32,768 samples to train a diffusion model using 50,000 training epochs and a learning rate of $1 \times 10^{-4}$. The diffusion model is trained and sampled following the implementation provided in EDM~\cite{karras2022elucidating}.

For posterior approximation, we set the total number of reverse diffusion steps to 20, treating the output at step 20 as the ground-truth posterior. To construct approximate posteriors at varying levels of fidelity, we also record intermediate samples from reverse steps 15 through 19. These intermediate outputs serve as posterior estimates for evaluation against the step-20 reference.

\subsection{Sampling and Training Hyperparameters}
\label{app:hyperparams}

In this section, we provide detailed configurations for the data generation process and training setup used throughout our experiments (see Section~\ref{sec:main_exp_config}). We evaluate our method across various combinations of (dim($x$),dim($\theta$)): $(3,3)$, $(10, 10)$, $(50, 10)$, and $(100, 100)$. We denote $N$ as the number of sample pairs $\{(\theta_i, x_i)\}$ drawn from the true joint distribution $p(\theta \mid x)p(x)$, and $K$ as the number of samples drawn from the estimated posterior $q(\theta \mid x)$ for each conditioning value $x$.

In the \textbf{CoLT Full} setting, we utilize a distance embedding network $\phi$ with input dimension equal to $\dim(\theta)$ and output dimension set to $\dim(\theta)$. Although alternative output dimensions for $\phi$ may potentially improve performance, we fix the output dimension to $\dim(\theta)$ to avoid additional hyperparameter tuning and ensure a fair comparison across settings.

All neural networks in our method (including $\phi$, $\theta_l$ and C2ST classifier) are implemented as 3-layer multilayer perceptrons (MLPs) with 256 hidden units per layer.

The table below summarizes the range of perturbation levels $\alpha$ tested for each experiment type, along with the sampling and training hyperparameters for both CoLT and C2ST.

\begin{table}[htbp]
\centering
\scriptsize
\resizebox{\linewidth}{!}{%
\begin{tabular}{@{}l p{5.3cm} c c c c l c c@{}}
\toprule
\textbf{Perturbation} & \textbf{Alphas} & \textbf{$N$} & \textbf{$K$} & \textbf{\#Eval} & \textbf{CoLT Epochs} & \textbf{CoLT LR} & \textbf{C2ST Epochs} & \textbf{C2ST LR} \\
\midrule
Mean Shift &
$(0.0,\ 0.05,\ 0.1,\ 0.15,\ 0.2,\ 0.25,\ 0.3)$ &
100 & 500 & 200 & 25 & $1\mathrm{e}{-5}$ & 1000 & $1\mathrm{e}{-5}$ \\
\addlinespace
Covariance Scaling &
$(0.0,\ 0.2,\ 0.4,\ 0.6,\ 0.8,\ 1.0,\ 1.2,\ 1.4)$ &
100 & 500 & 200 & 1000 & $1\mathrm{e}{-5}$ & 1000 & $1\mathrm{e}{-5}$ \\
\addlinespace
Anisotropic Perturbation &
$(0.0,\ 0.5,\ 1.0,\ 1.5,\ 2.0,\ 2.5,\ 3.0)$ &
100 & 500 & 200 & 1000 & $1\mathrm{e}{-5}$ & 1000 & $1\mathrm{e}{-5}$ \\
\addlinespace
Kurtosis Adjustment via $t$-Distribution &
$(0.0,\ 0.1,\ 0.2,\ 0.3,\ 0.4,\ 0.5,\ 0.6)$ &
100 & 500 & 200 & 1000 & $1\mathrm{e}{-5}$ & 1000 & $1\mathrm{e}{-5}$ \\
\addlinespace
Additional Modes &
$(0.0,\ 0.05,\ 0.1,\ 0.15,\ 0.2,\ 0.25,\ 0.3,\ 0.35,\ 0.4)$ &
100 & 500 & 200 & 1000 & $5\mathrm{e}{-5}$ & 1000 & $5\mathrm{e}{-5}$ \\
\addlinespace
Mode Collapse &
$(0.0,\ 0.1,\ 0.2,\ 0.3,\ 0.4,\ 0.5,\ 0.6)$ &
100 & 500 & 200 & 1000 & $1\mathrm{e}{-5}$ & 1000 & $1\mathrm{e}{-5}$ \\
\addlinespace
Blind Prior &
--- &
100 & 500 & 200 & 1000 & $1\mathrm{e}{-3}$ & 1000 & $1\mathrm{e}{-5}$ \\
\addlinespace
Tree (Toy Example) &
$(0.0,\ 0.5,\ 1.0,\ 1.5,\ 2.0,\ 2.5,\ 3.0,\ 3.5,\ 4.0)$ &
1000 & 100 & 200 & 5000 & $1\mathrm{e}{-5}$ & 5000 & $1\mathrm{e}{-5}$ \\
\addlinespace
Diffusion &
$(0, \ 1, \ 2, \ 3, \ 4, \ 5)$  & 1000 & 200 & 200 & 1000 & $1\mathrm{e}{-5}$ & 1000 & $1\mathrm{e}{-5}$ \\
\bottomrule
\end{tabular}%
}
\caption{
Experimental configurations for each type of posterior perturbation. Columns specify the perturbation type, tested $\alpha$ values, sample counts, evaluation batch size, and training hyperparameters for CoLT and C2ST methods.
}
\label{tab:perturbation_settings}
\end{table}

\subsection{Curvature Transformation and Calculation}
\label{app:curvature}

In Section~\ref{sec:main_exp}, we introduce the concept of curvature in the parameter space. Specifically, we construct a transformation network to increase the curvature of $\theta$. The network consists of a fully connected layer (`\texttt{torch.nn.Linear}`) with input dimension equal to the dimension of $\theta$, a hidden layer of size 128, followed by an element-wise sine activation, and another linear layer mapping from 128 to the original dimension of $\theta$. The weights of the linear layers are initialized using PyTorch’s default random initialization.

To compute the curvature, we apply the principal curve algorithm from \citet{hastie1989principal}. The resulting principal curves are shown in Figures~\ref{fig:curve_gaussian} and~\ref{fig:curve_transform}.

We observe that before applying the curvature-inducing transformation, the principal curve closely resembles a straight line, with a total absolute curvature of approximately 62, as expected for a highly correlated Gaussian distribution. After applying the transformation, the resulting parameter space exhibits significantly increased curvature, with a total absolute curvature of around 400.

This transformation provides an effective mechanism for inducing curvature in $\theta$ space, allowing us to study the performance of methods under non-Euclidean geometries.

\newpage
\section{Additional Experimental Results}
\label{app:additional_exp}
\subsection{Tree Task}

We present additional visualizations for the toy tree-structured posterior under various levels of perturbation $\alpha$. As $\alpha$ increases, the sampled points become increasingly dispersed and less concentrated around the underlying structure. The shaded region indicates the true posterior manifold corresponding to $\alpha = 0$.

\begin{figure}[htbp]
    \centering
    \begin{subfigure}[b]{0.18\linewidth}
        \includegraphics[width=\linewidth,height=2.5cm]{tree/alpha_0.0.png}
        \caption{$\alpha = 0$}
    \end{subfigure}
    \begin{subfigure}[b]{0.18\linewidth}
        \includegraphics[width=\linewidth,height=2.5cm]{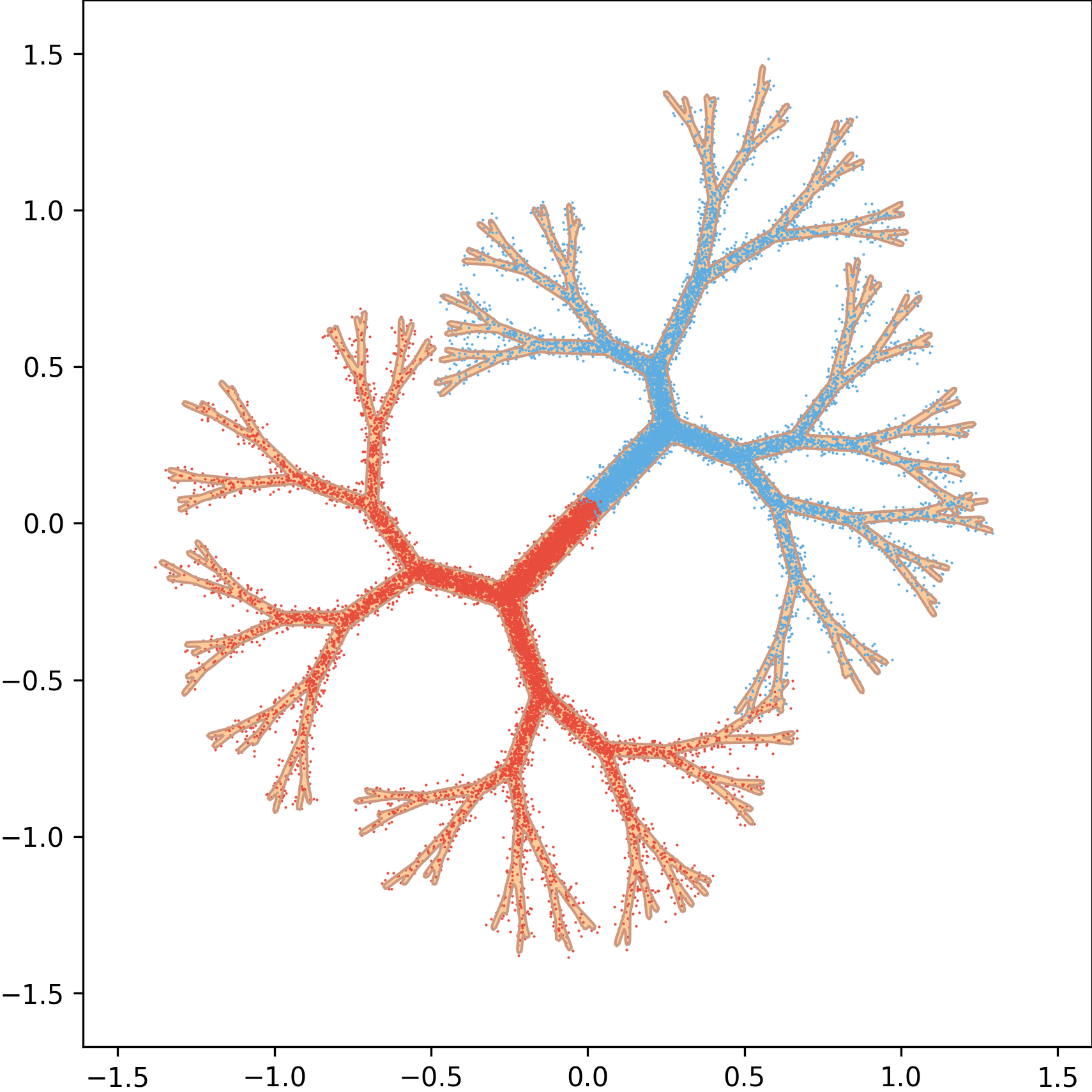}
        \caption{$\alpha = 0.5$}
    \end{subfigure}
    \begin{subfigure}[b]{0.18\linewidth}
        \includegraphics[width=\linewidth,height=2.5cm]{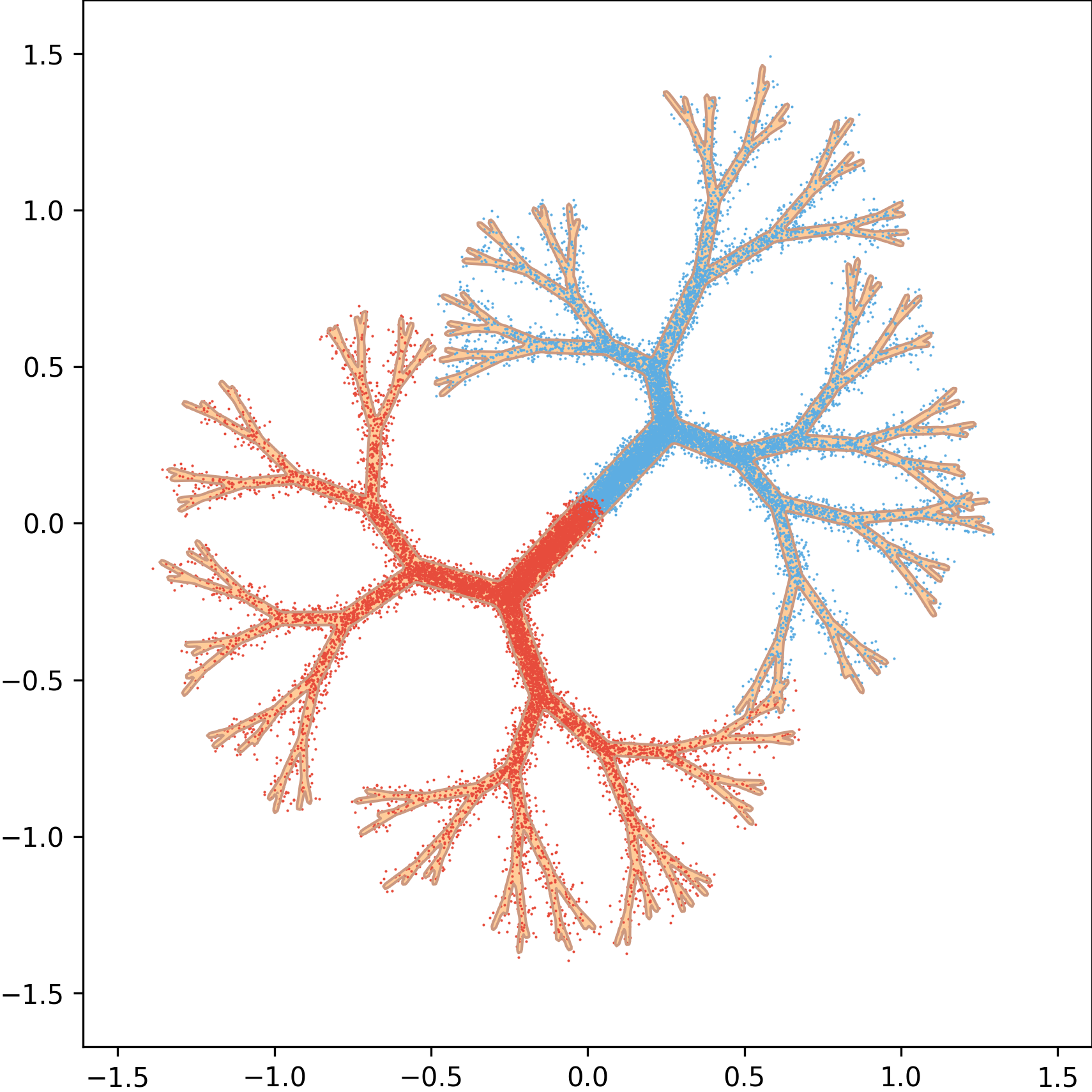}
        \caption{$\alpha = 1.0$}
    \end{subfigure}
    \begin{subfigure}[b]{0.18\linewidth}
        \includegraphics[width=\linewidth,height=2.5cm]{tree/alpha_1.5.png}
        \caption{$\alpha = 1.5$}
    \end{subfigure}
    \begin{subfigure}[b]{0.18\linewidth}
        \includegraphics[width=\linewidth,height=2.5cm]{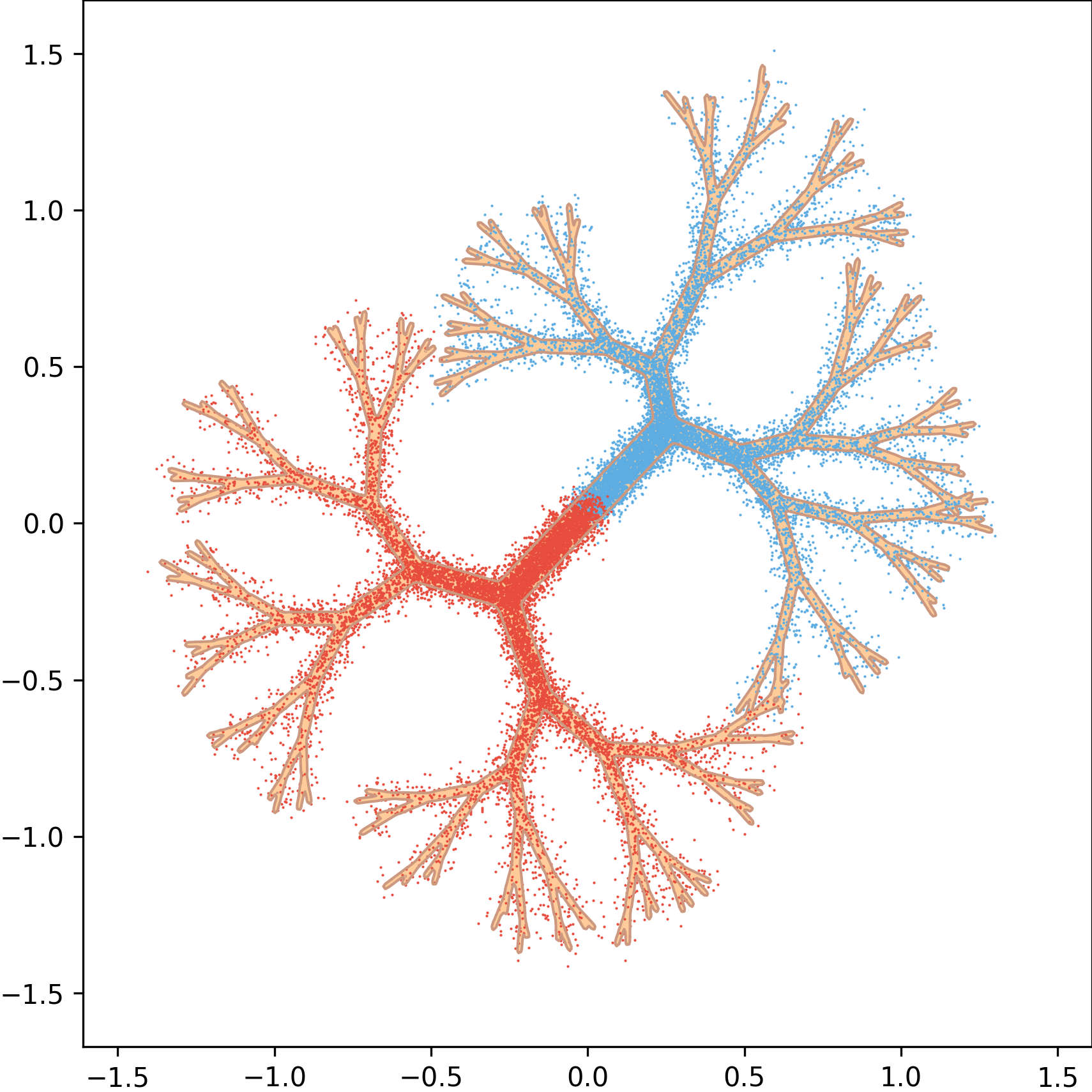}
        \caption{$\alpha = 2.0$}
    \end{subfigure}

    \vspace{0.2cm}

    \begin{subfigure}[b]{0.18\linewidth}
        \includegraphics[width=\linewidth,height=2.5cm]{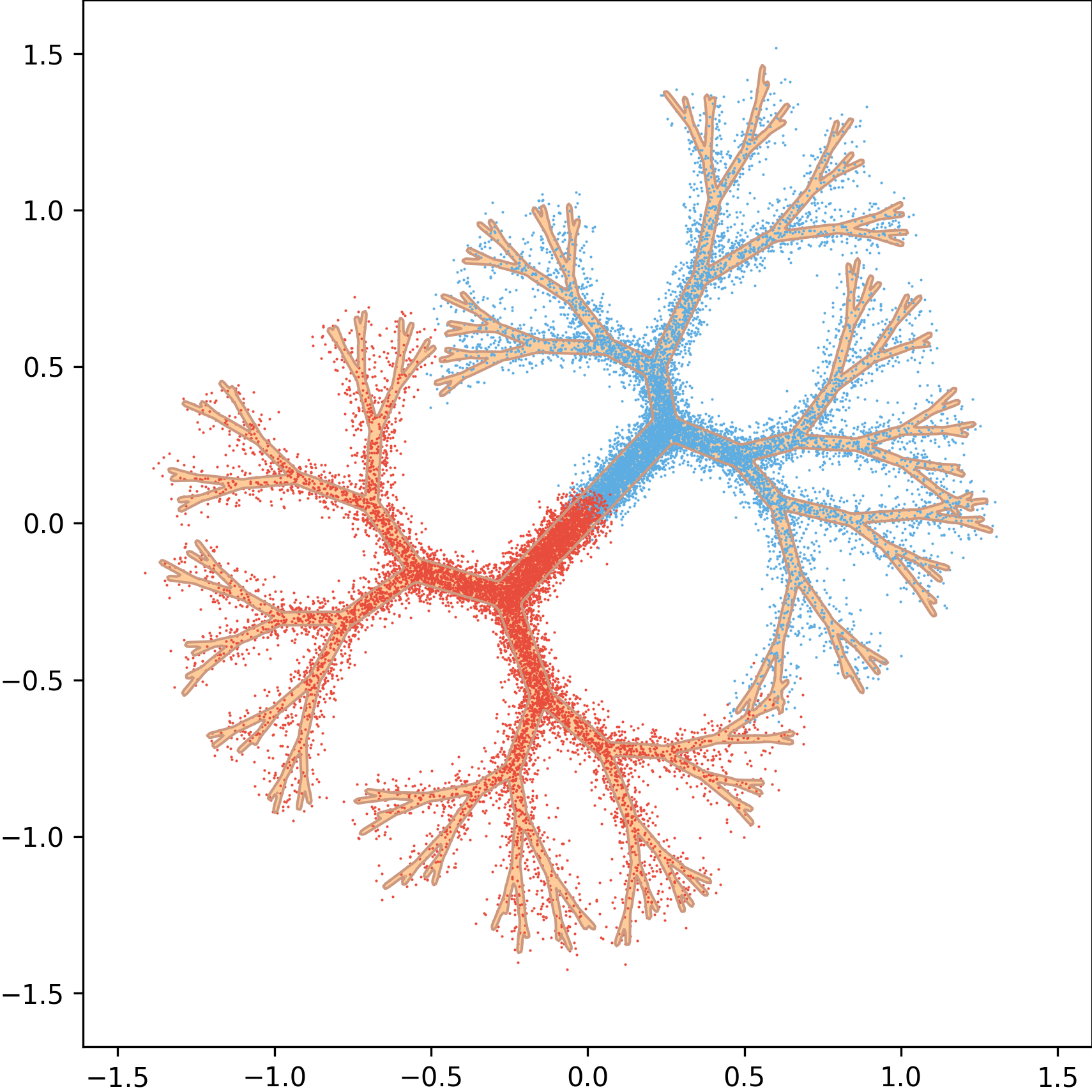}
        \caption{$\alpha = 2.5$}
    \end{subfigure}
    \begin{subfigure}[b]{0.18\linewidth}
        \includegraphics[width=\linewidth,height=2.5cm]{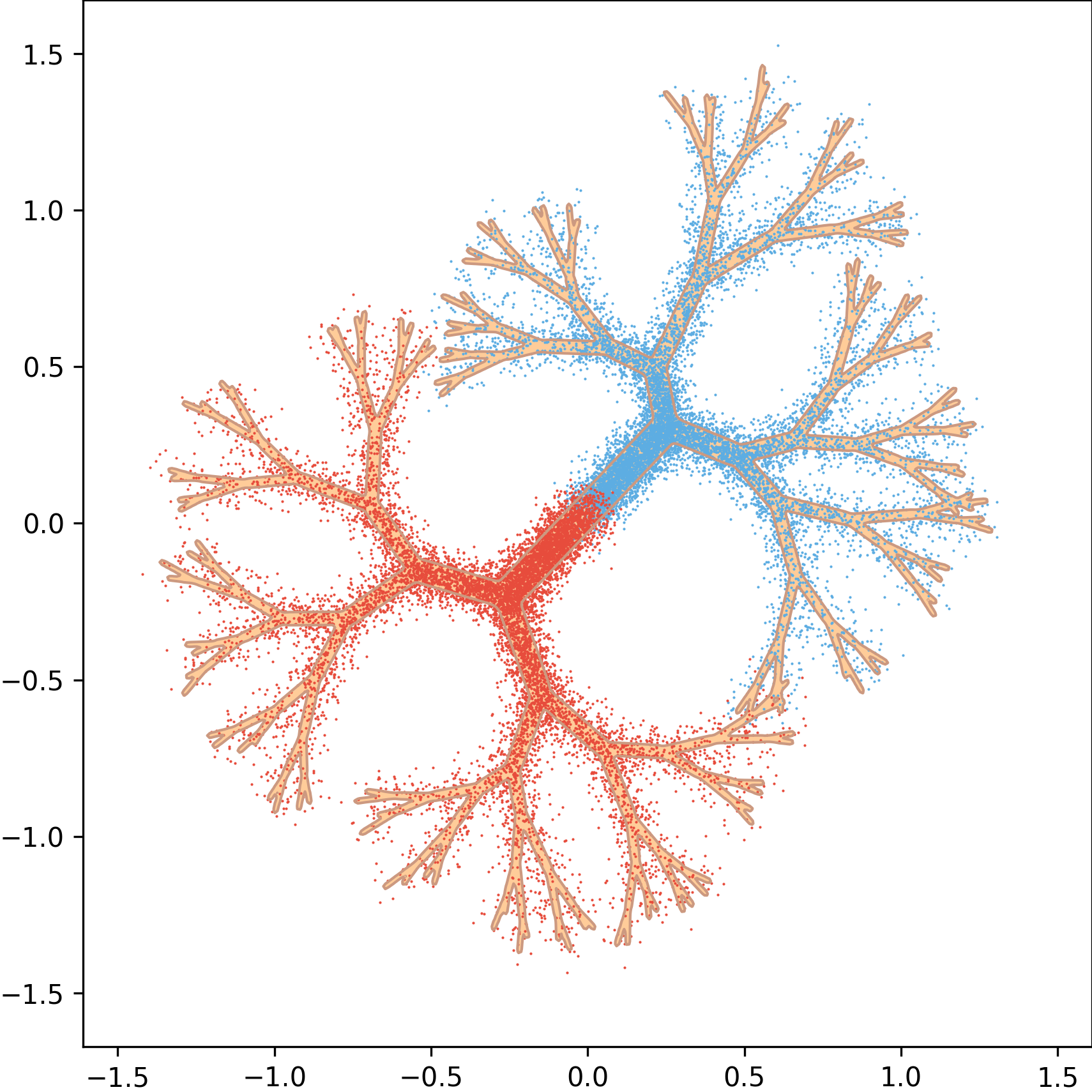}
        \caption{$\alpha = 3.0$}
    \end{subfigure}
    \begin{subfigure}[b]{0.18\linewidth}
        \includegraphics[width=\linewidth,height=2.5cm]{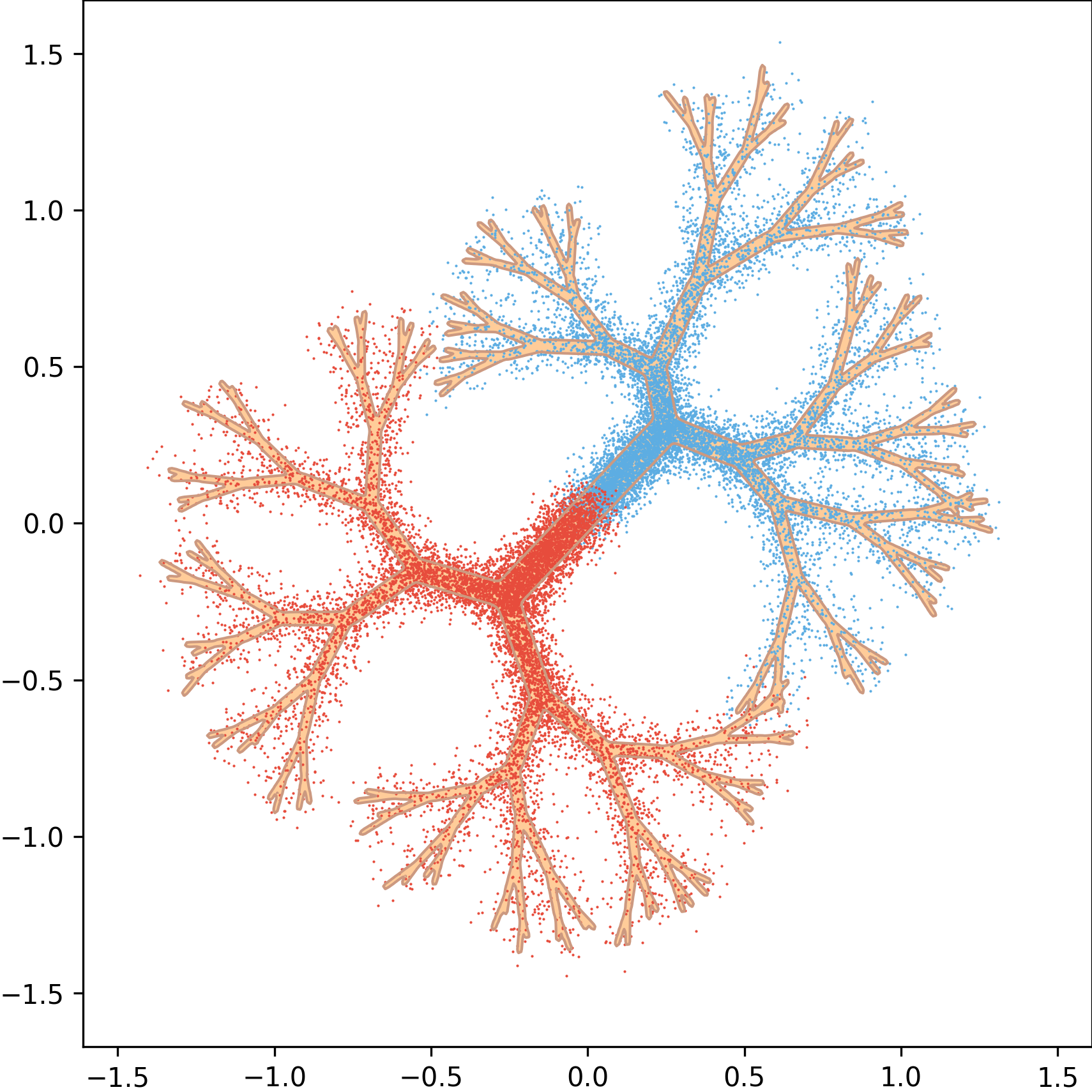}
        \caption{$\alpha = 3.5$}
    \end{subfigure}
    \begin{subfigure}[b]{0.18\linewidth}
        \includegraphics[width=\linewidth,height=2.5cm]{tree/alpha_4.0.png}
        \caption{$\alpha = 4.0$}
    \end{subfigure}
    \caption{Tree samples across varying $\alpha$ values.}
    \label{fig:tree_alpha_all}
\end{figure}

The statistical power for different methods, including \textbf{CoLT Full} and \textbf{CoLT ID}, is shown in Figure~\ref{fig:tree_full}.

\begin{figure}[ht]
    \centering
    \includegraphics[width=0.5\linewidth]{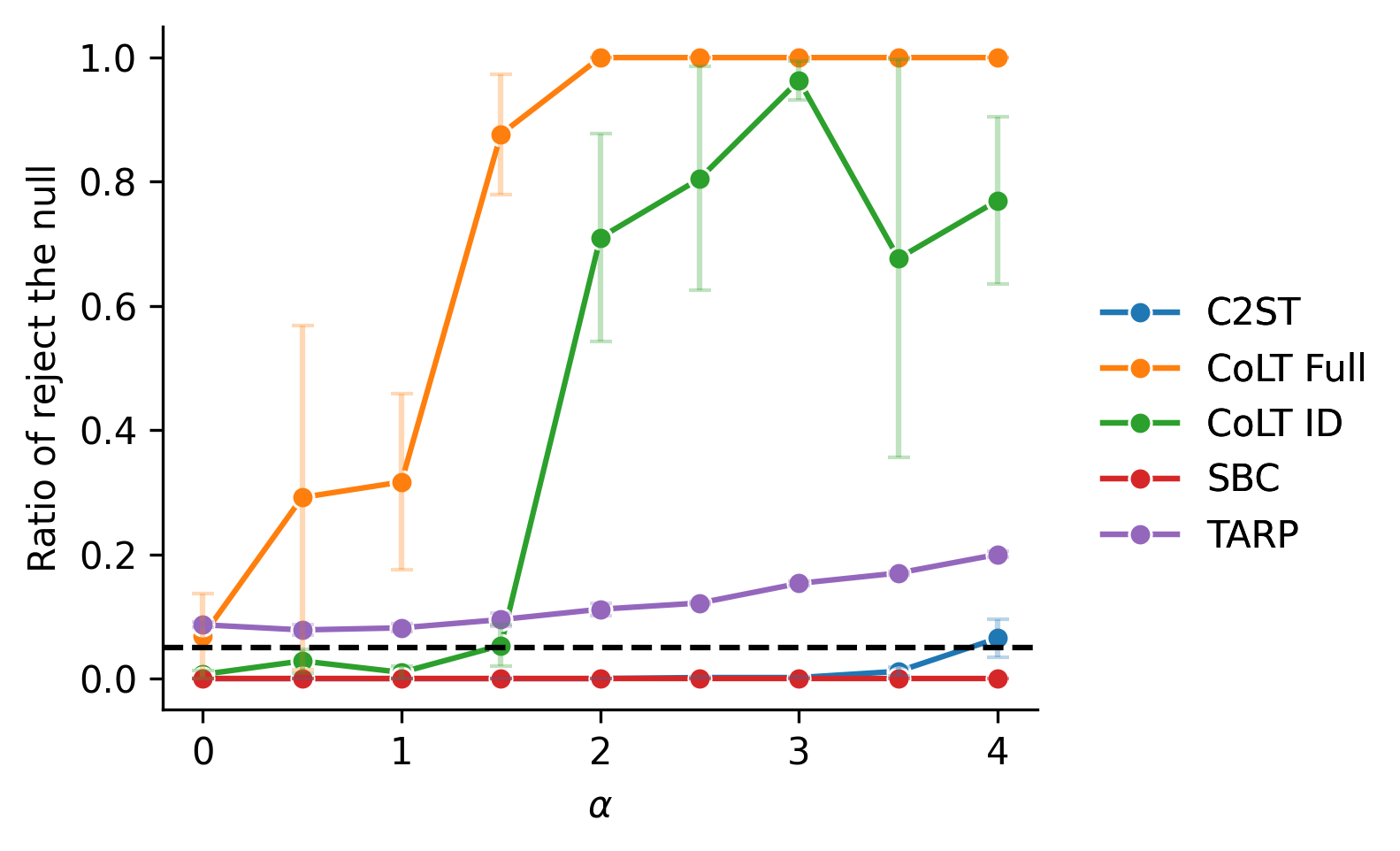}
    \caption{Statistical power for tree-structured tasks across all evaluated methods, including CoLT variants.}
    \label{fig:tree_full}
\end{figure}

\subsection{Perturbation}
\label{app:perturbation}

In this section, we present additional results across varying dimensions of $(x, \theta)$ for different tasks, evaluated under multiple perturbation magnitudes $\alpha$ in the \textit{non-curvature} setting. The results demonstrate the robustness of CoLT and baseline methods under a wide range of perturbations.

Specifically, we show:
\begin{itemize}
    \item \textbf{Mean shifts}: Figure~\ref{fig:app_mean_shift_noc}
    \item \textbf{Covariance scaling}: Figure~\ref{fig:app_var_shift_noc}
    \item \textbf{Anisotropic covariance perturbation}: Figure~\ref{fig:app_distort_var_noc}
    \item \textbf{Kurtosis adjustment via $t$-distribution}: Figure~\ref{fig:app_tail_noc}
    \item \textbf{Additional modes}: Figure~\ref{fig:app_mixture_noc}
    \item \textbf{Mode collapse}: Figure~\ref{fig:app_collapse_noc}
\end{itemize}

\subsection{Blind Prior}

In addition to the above perturbation strategies, we also include a \textbf{Blind Prior} setting, where the estimated posterior ignores the input $x$ entirely: 
$q(\theta \mid x) = p(\theta),$
i.e., the posterior estimate is simply the prior distribution of $\theta$. This scenario serves as an important pathological case, as it has been shown to cause both SBC and TARP (with random reference points) to fail—these methods are unable to detect the distributional discrepancy between $q(\theta \mid x)$ and the true posterior $p(\theta \mid x)$. By contrast, we demonstrate that our proposed method remains sensitive and effective even in this setting.

\paragraph{Blind Prior}

In Table~\ref{tab:blind}, we present results under the Blind Prior setting, where the estimated posterior ignores the conditioning input and is set as $q(\theta \mid x) = p(\theta)$. This case is particularly challenging, as both TARP and SBC fail to detect the resulting distributional discrepancy.

In contrast, our proposed methods—CoLT ID and CoLT Full—successfully detect this violation across all dimensional settings. Notably, while C2ST is effective in low dimensions, its power deteriorates significantly as the dimensionality increases. Our methods maintain high power even in high-dimensional regimes, demonstrating their robustness and effectiveness in detecting subtle posterior mismatches.

\begin{table}[h!]
\centering
\scriptsize
\setlength{\tabcolsep}{6pt}
\begin{tabular}{lcccc}
\toprule
\textbf{Method} & \(x=3, \theta=3\) & \(x=10, \theta=10\) & \(x=50, \theta=10\) & \(x=100,\theta=100\) \\
\midrule
CoLT ID & \textbf{1.000} ± 0.000 & \textbf{1.000} ± 0.000 & \textbf{1.000} ± 0.000 & \textbf{1.000} ± 0.000 \\
CoLT Full & 0.975 ± 0.014 & 0.778 ± 0.222 & 0.693 ± 0.307 & 0.452 ± 0.260 \\
C2ST & \textbf{1.000} ± 0.000 & \textbf{1.000} ± 0.000 & 0.847 ± 0.038 & 0.122 ± 0.007 \\
SBC & 0.052 ± 0.004 & 0.028 ± 0.007 & 0.048 ± 0.007 & 0.040 ± 0.015 \\
TARP & 0.053 ± 0.004 & 0.047 ± 0.012 & 0.035 ± 0.009 & 0.068 ± 0.004 \\
\bottomrule
\end{tabular}
\caption{Statistical power (mean ± stderr) under the Blind Prior setting for each method, evaluated across increasing dimensions. Only CoLT variants consistently maintain high power as dimensionality increases.}
\label{tab:blind}
\end{table}

In addition to reporting the statistical power of each method in Table~\ref{tab:blind}, we provide their corresponding Type I error rates in Table~\ref{tab:blind_appendix}. Since the $p$-value threshold is set to 0.05, all methods successfully control the Type I error within the expected range, indicating that none falsely reject the null hypothesis under the correctly specified posterior.

\begin{table}[htbp]
\centering
\scriptsize
\setlength{\tabcolsep}{5pt}
\begin{tabular}{lcccc}
\toprule
\textbf{Method} & \(x=3, \theta=3\) & \(x=10, \theta=10\) & \(x=50, \theta=10\) & \(x=100,\theta=100\) \\
\midrule
C2ST & 0.0767 ± 0.0044 & 0.0433 ± 0.0067 & 0.0200 ± 0.0050 & 0.0233 ± 0.0073 \\
CoLT Full & 0.0400 ± 0.0076 & 0.0400 ± 0.0029 & 0.0517 ± 0.0109 & 0.0467 ± 0.0093 \\
CoLT ID & 0.0567 ± 0.0093 & 0.0550 ± 0.0076 & 0.0467 ± 0.0044 & 0.0433 ± 0.0017 \\
SBC & 0.0400 ± 0.0104 & 0.0350 ± 0.0000 & 0.0350 ± 0.0000 & 0.0350 ± 0.0087 \\
TARP & 0.0367 ± 0.0093 & 0.0383 ± 0.0017 & 0.0517 ± 0.0109 & 0.0433 ± 0.0017 \\
\bottomrule
\end{tabular}
\caption{Type I error (mean ± stderr) under the Blind Prior setting for each method, evaluated across increasing dimensions. }
\label{tab:blind_appendix}
\end{table}

\subsection{Diffusion Sampling Results}

In this section, we provide additional visualizations related to the diffusion-based posterior approximation. Figure~\ref{fig:diff_app} illustrates the underlying data manifold used to train the diffusion model, as well as the sampling results from reverse steps 15 through 20. These samples allow us to visualize the quality of intermediate outputs as the reverse process progresses.

We also include a power and Type I error curve that quantifies how the performance of our method changes with respect to the number of reverse steps. As expected, the statistical power increases as the number of reverse steps approaches 20, while Type I error remains well-controlled.

\clearpage
\newpage

\begin{figure}[htbp]
    \centering
    \begin{subfigure}[b]{0.24\linewidth}
        \includegraphics[width=\linewidth,height=2.5cm]{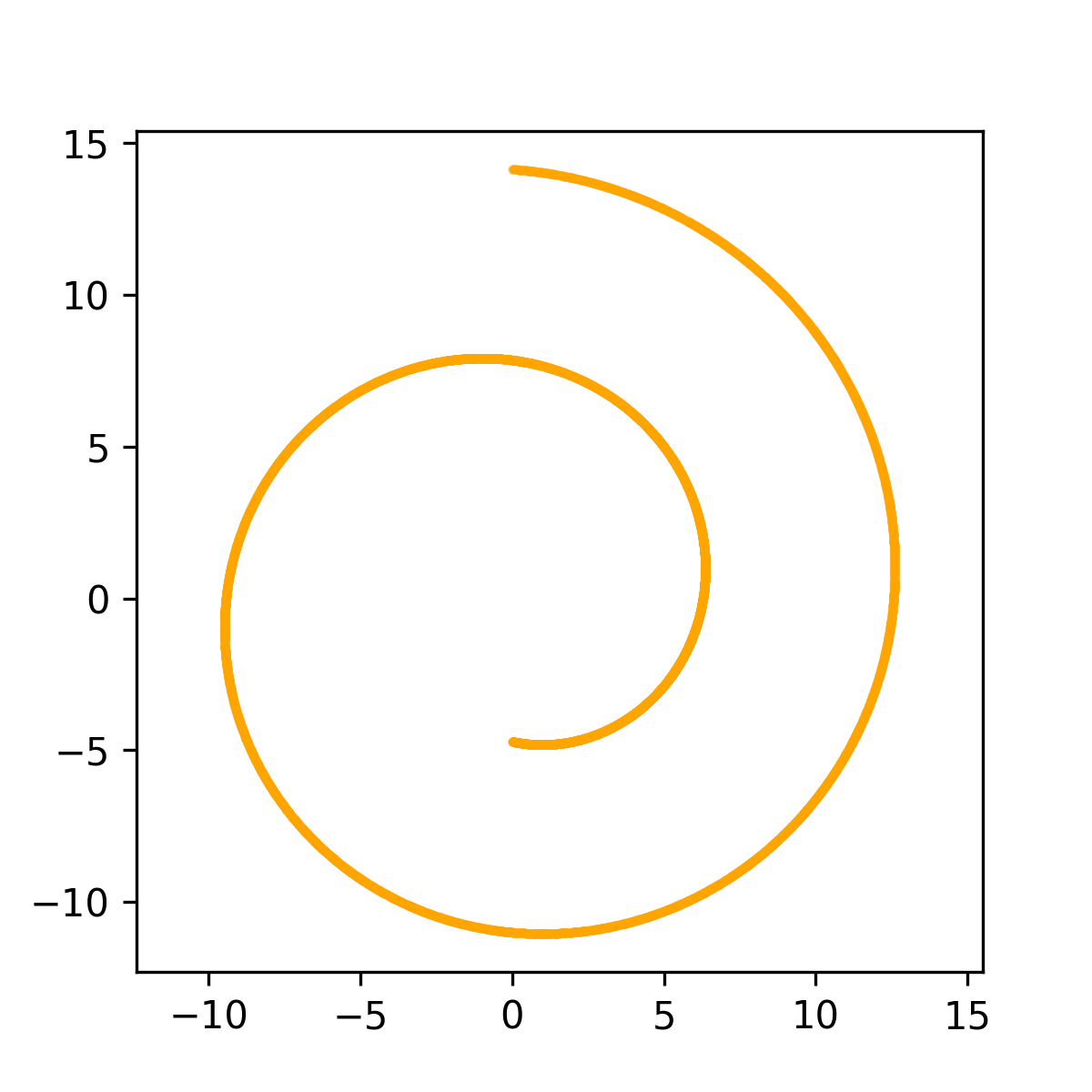}
        \caption{Training Data}
        \label{fig:diff_data}
    \end{subfigure}
    \begin{subfigure}[b]{0.24\linewidth}
        \includegraphics[width=\linewidth,height=2.5cm]{diffusion/edm_circle_19.png}
        \caption{Step 20}
        \label{fig:step20}
    \end{subfigure}
    \begin{subfigure}[b]{0.24\linewidth}
        \includegraphics[width=\linewidth,height=2.5cm]{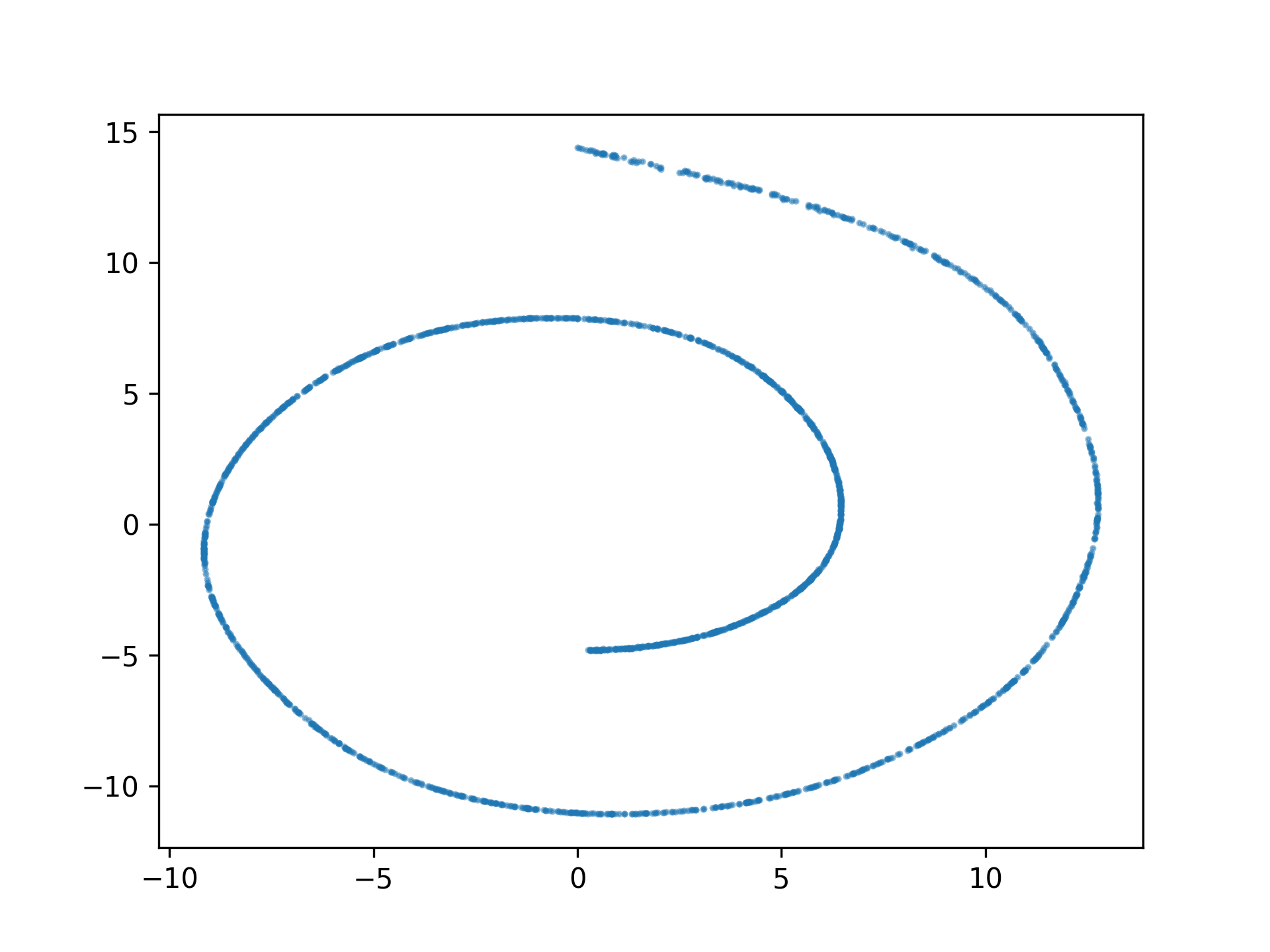}
        \caption{Step 19}
        \label{fig:step19}
    \end{subfigure}
    \begin{subfigure}[b]{0.24\linewidth}
        \includegraphics[width=\linewidth,height=2.5cm]{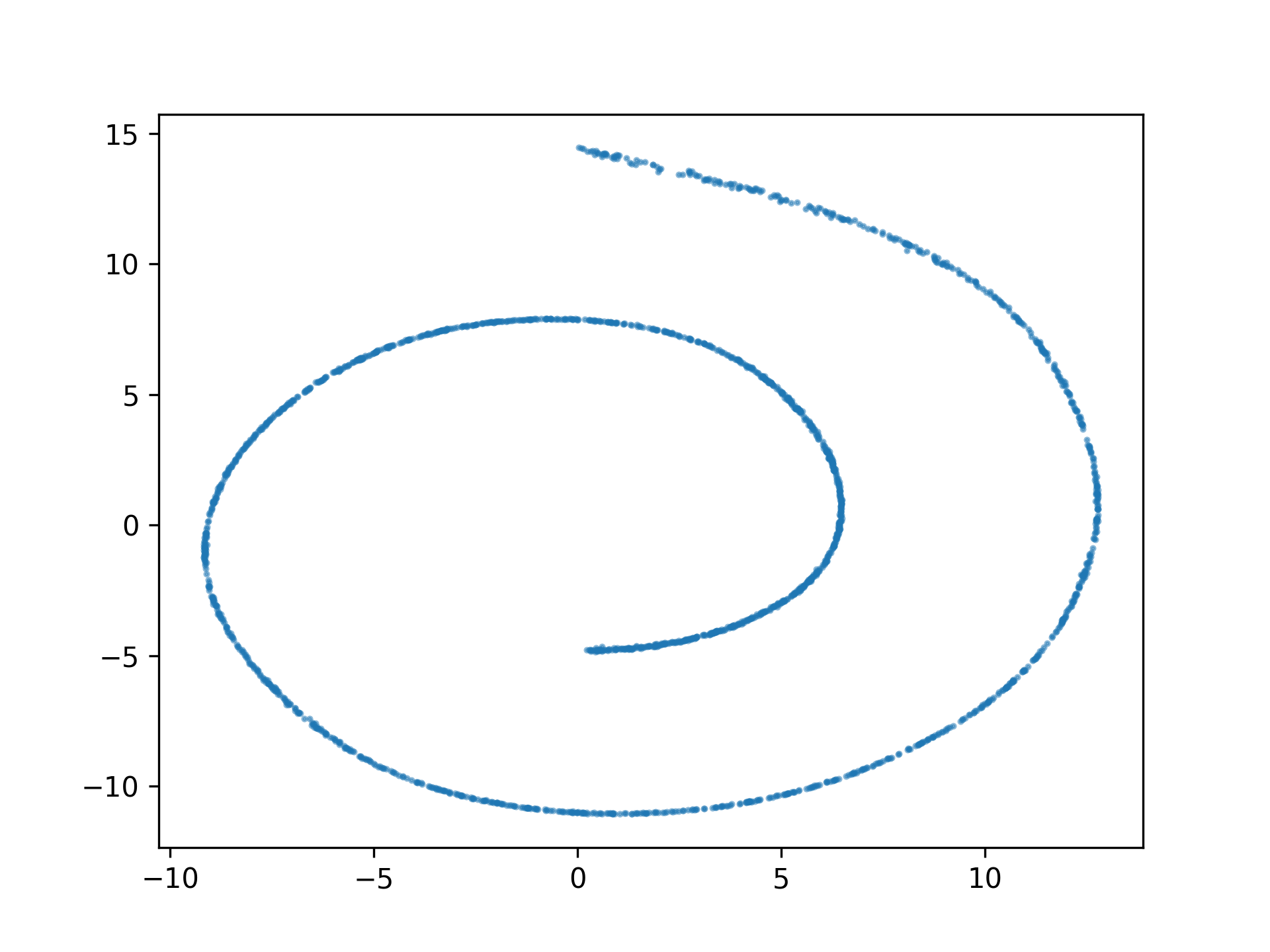}
        \caption{Step 18}
        \label{fig:step18}
    \end{subfigure}
    \\[\baselineskip] 
    \begin{subfigure}[b]{0.24\linewidth}
        \includegraphics[width=\linewidth,height=2.5cm]{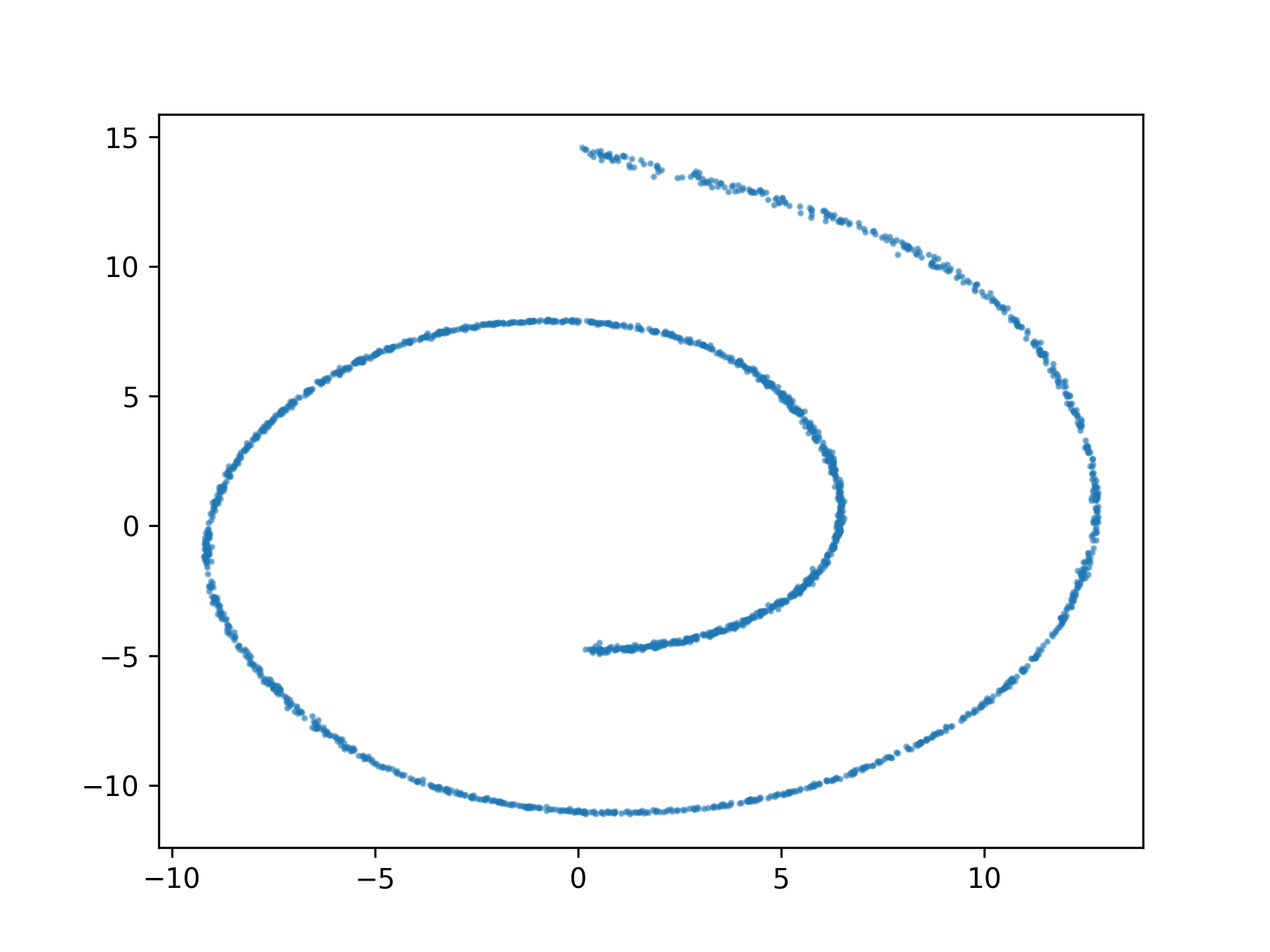}
        \caption{Step 17}
        \label{fig:step17}
    \end{subfigure}
    \begin{subfigure}[b]{0.24\linewidth}
        \includegraphics[width=\linewidth,height=2.5cm]{diffusion/edm_circle_15.png}
        \caption{Step 16}
        \label{fig:step16}
    \end{subfigure}
    \begin{subfigure}[b]{0.24\linewidth}
        \includegraphics[width=\linewidth,height=2.5cm]{diffusion/edm_circle_14.png}
        \caption{Step 15}
        \label{fig:step15}
    \end{subfigure}
    \begin{subfigure}[b]{0.24\linewidth}
        \includegraphics[width=\linewidth,height=2.5cm]{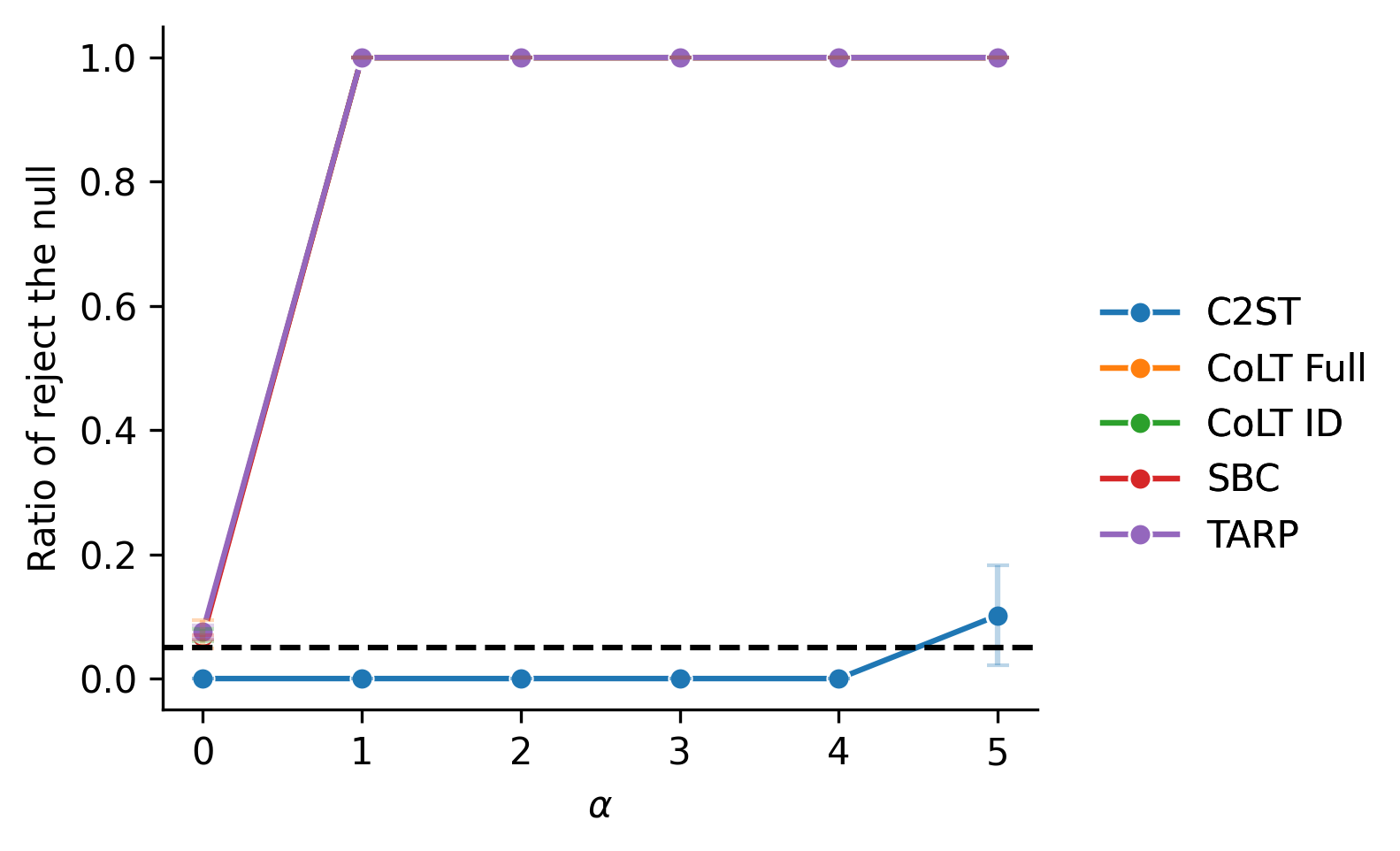}
        \caption{Power and Type I Error}
        \label{fig:power_curve}
    \end{subfigure}
    \caption{
    Visualization of the diffusion-based posterior sampling process. (a) shows the data manifold used for training. (b)--(g) show sampled distributions at various reverse diffusion steps (15 to 20), where step 20 is treated as the ground truth. (h) plots the statistical power and Type I error as a function of the step gap from the final posterior.
    }
    \label{fig:diff_app}
\end{figure}

\begin{figure}
    \centering
    \includegraphics[width=\linewidth]{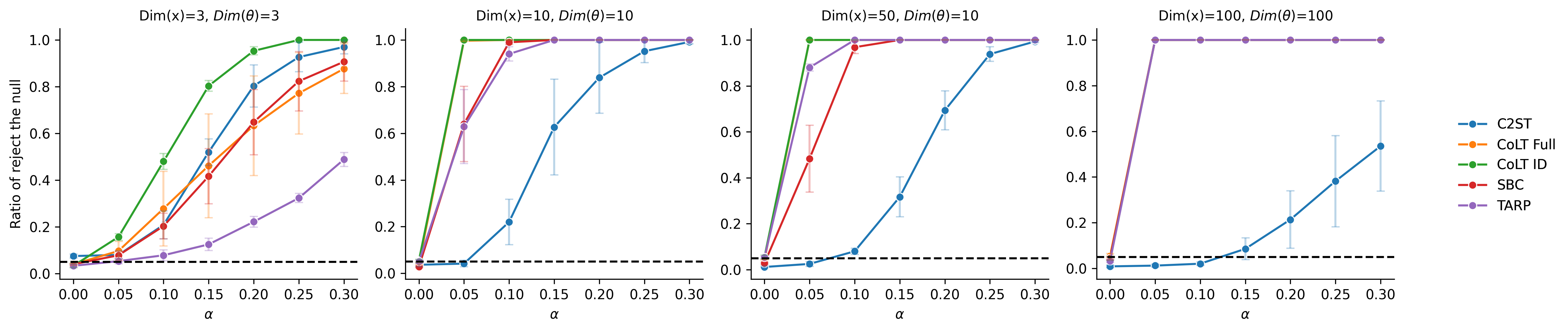}
    \caption{Mean Shift}
    \label{fig:app_mean_shift_noc}
\end{figure}
\begin{figure}
    \centering
    \includegraphics[width=\linewidth]{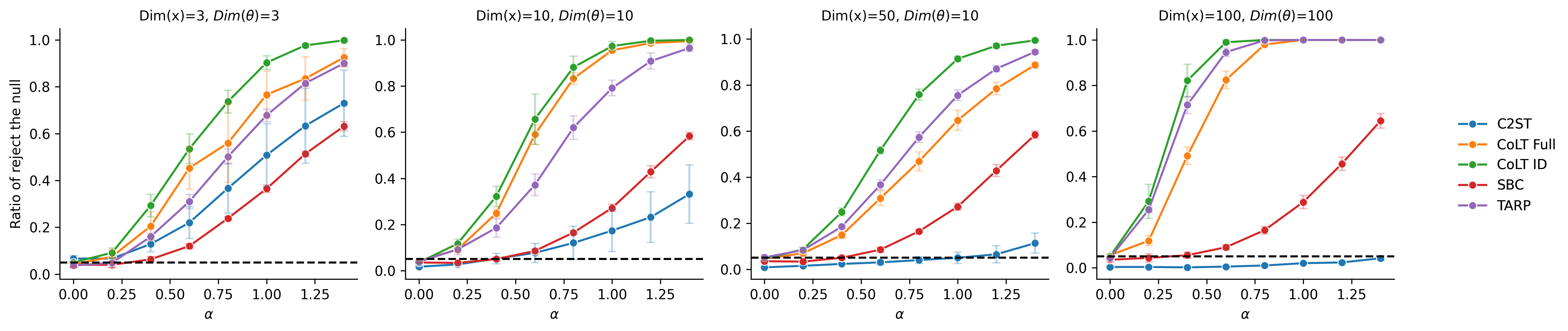}
    \caption{Covariance Scaling}
    \label{fig:app_var_shift_noc}
\end{figure}
\begin{figure}
    \centering
    \includegraphics[width=\linewidth]{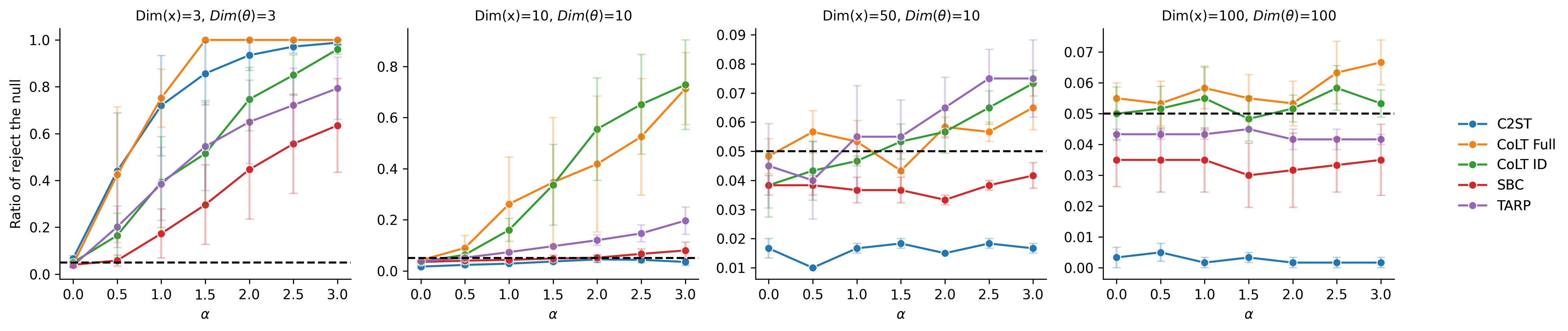}
    \caption{Anisotropic Covariance Perturbation}
    \label{fig:app_distort_var_noc}
\end{figure}
\begin{figure}
    \centering
    \includegraphics[width=\linewidth]{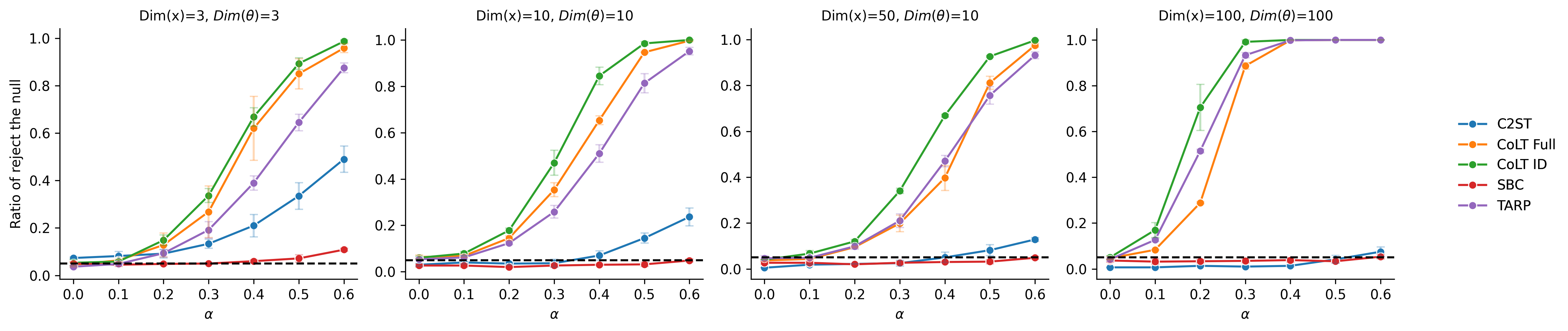}
    \caption{Kurtosis Adjustment via t-Distribution}
    \label{fig:app_tail_noc}
\end{figure}
\begin{figure}
    \centering
    \includegraphics[width=\linewidth]{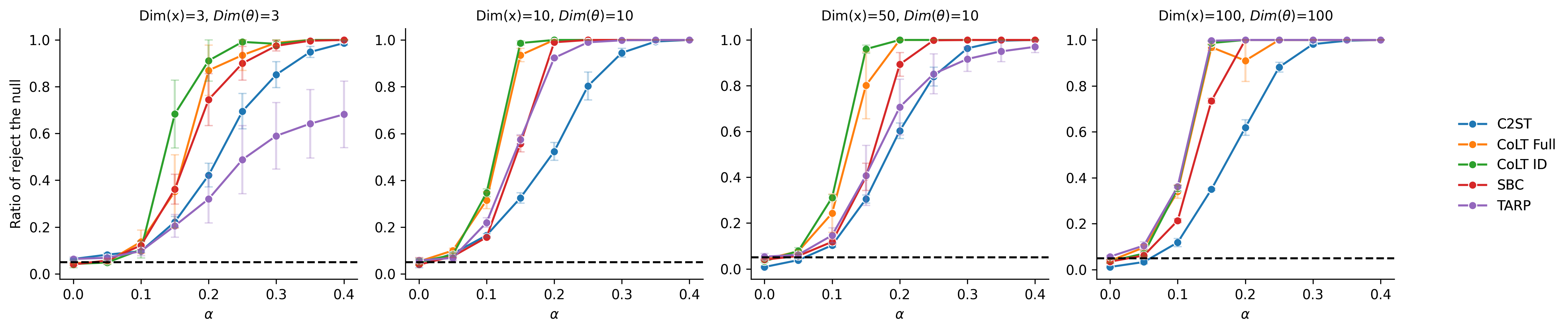}
    \caption{Additional Modes}
    \label{fig:app_mixture_noc}
\end{figure}
\begin{figure}
    \centering
    \includegraphics[width=\linewidth]{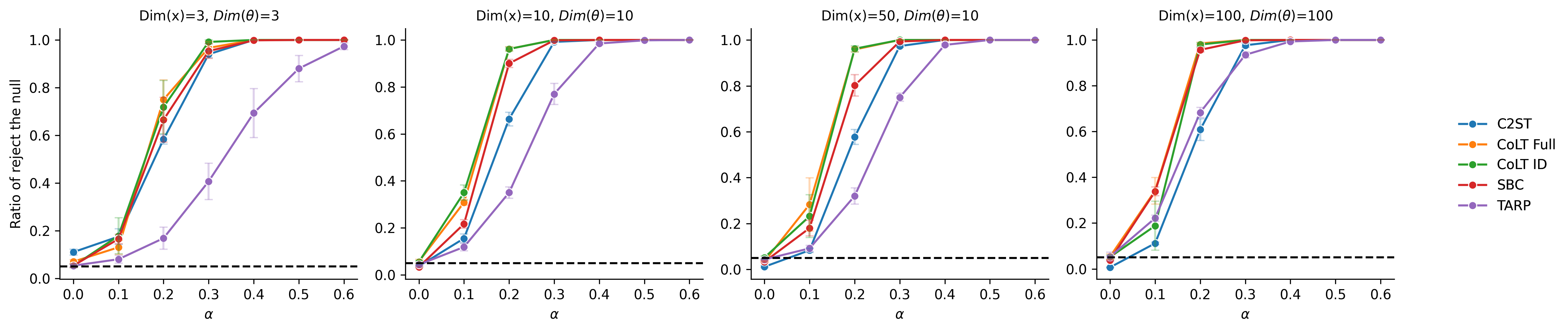}
    \caption{Mode Collapse}
    \label{fig:app_collapse_noc}
\end{figure}



\end{document}